\newtheorem{theorem}{Theorem}
\title{An Unlearning Framework for Continual Learning}
\author {
    % Authors
    Sayanta Adhikari\textsuperscript{\equalcontrib \rm 1},
    Vishnuprasadh Kumaravelu\textsuperscript{\equalcontrib \rm 1,2},
    P. K. Srijith\textsuperscript{\rm 1}
}
\begin{document}

\maketitle

\begin{abstract}
Growing concerns surrounding AI safety and data privacy have driven the development of Machine Unlearning as a potential solution. However, current machine unlearning algorithms are designed to complement the offline training paradigm. The emergence of the Continual Learning (CL) paradigm promises incremental model updates, enabling models to learn new tasks sequentially. Naturally, some of those tasks may need to be unlearned to address safety or privacy concerns that might arise. We find that applying conventional unlearning algorithms in continual learning environments creates two critical problems: performance degradation on retained tasks and task relapse, where previously unlearned tasks resurface during subsequent learning. Furthermore, most unlearning algorithms require data to operate, which conflicts with CL's philosophy of discarding past data. A clear need arises for unlearning algorithms that are data-free and mindful of future learning. To that end, we propose UnCLe, an Unlearning framework for Continual Learning. UnCLe employs a hypernetwork that learns to generate task-specific network parameters, using task embeddings. Tasks are unlearned by aligning the corresponding generated network parameters with noise, without requiring any data. Empirical evaluations on several vision data sets demonstrate UnCLe's ability to sequentially perform multiple learning and unlearning operations with minimal disruption to previously acquired knowledge.
\end{abstract}

% Uncomment the following to link to your code, datasets, an extended version or similar.
% You must keep this block between (not within) the abstract and the main body of the paper.
% \begin{links}
%     \link{Code}{https://aaai.org/example/code}
%     \link{Datasets}{https://aaai.org/example/datasets}
%     \link{Extended version}{https://aaai.org/example/extended-version}
% \end{links}

\section{Introduction}

Accelerating growth in AI adoption has brought with it safety and privacy concerns, leading to increasing regulatory scrutiny \cite{gdpr}. This has led to the development of Machine Unlearning so that data found in violation of safety and privacy can be selectively removed from a model with minimal effects on the rest of the model's learned knowledge. Algorithmic advances in unlearning have enabled the effective removal of unwanted information whilst safely preserving the rest \cite{unlearning_survey}. However, the vast majority of contemporary unlearning algorithms are designed to complement offline-trained models. Offline training, which involves training a model on a large, monolithic dataset once and deploying it, is the dominant paradigm of the day. However, the rigid nature of the paradigm, where a trained model cannot be updated to reflect new data, is subject to rising criticism. Naively re-training an already trained model can lead to the model forgetting what it already knows, due to differences in data distributions. This phenomenon is known as catastrophic forgetting, and its mitigation has led to the rise of an alternate training paradigm aptly dubbed Continual Learning (CL). CL allows the progressive update of models as new data arises, while ensuring that previously learned information is preserved. Naturally, unlearning some of those incremental updates, termed tasks in the CL literature, is as important as learning them. The newfound flexibility to learn new tasks with time should be complemented by effective unlearning strategies so that any privacy or safety concerns that may arise with a newly learned task are promptly addressed. As depicted in Figure \ref{fig: setup_fig}, a unified treatment of CL and unlearning would empower models to learn new tasks and unlearn obsolete ones with minimal interference to the rest. Yet, there is a lack of frameworks that simultaneously address both challenges. 

\begin{figure}[t]
    \centering  \includegraphics[width=0.99\linewidth]{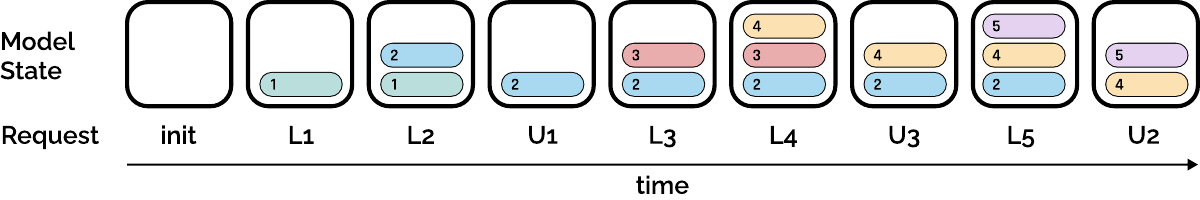}
    \caption{A visualization of the model's state with time. With each learning operation $L_x$, the model gains expertise on a particular task $x$, as represented by the colored tile added to the model state. Conversely, an unlearning operation $U_x$, erases the model's expertise of task $x$.}
    \label{fig: setup_fig}
\end{figure}

Integrating unlearning in a CL framework is not straightforward. One of the CL's core principles is to discard data from past tasks as new tasks are encountered. This is problematic as most unlearning algorithms require either the data that needs to be unlearned (forget-set) or the entirety of the data that the model was trained on (forget-set + retain-set). Even if we resolve the data requirement deadlock through the use of replay buffers that contain representative subsets of data from past tasks, we find that unlearning operations in a CL environment have harmful spillover effects, degrading the model's performance on other tasks. In addition, we find that, with conventional unlearning methods, unlearned tasks relapse and recover lost performance as the model subsequently learns new tasks. In other words, unlearning algorithms that have proven effective in offline settings do not translate well when applied in a CL environment. This is because conventional methods were simply not designed to operate on incrementally gathered knowledge or anticipate future learning operations past the unlearning operation. This suggests the need for an unlearning solution that is purpose-built to operate in a CL setting.

Furthermore, in compliance with CL desiderata, a unified solution should be able to perform both learning and unlearning operations in the absence of historical data. In light of such requirements, we propose \textbf{UnCLe}: an \textbf{Un}learning Framework for \textbf{C}ontinual \textbf{Le}arning. UnCLe employs a hypernetwork that learns to generate task-specific network parameters, conditioned on corresponding task embeddings. Tasks are unlearned by aligning generated network parameters with noise, without requiring any data. Empirical evaluations on several vision datasets demonstrate UnCLe's ability to sequentially perform multiple learning and unlearning operations with minimal disruption to previously acquired knowledge.

\section{Related Works}
\label{related_works}

\subsection{Continual Learning} CL methods largely fall into one of the three schools of thought. \textbf{(1) Regularization-based} methods mitigate forgetting through an additional regularization term in the learning objective that constrains model changes to minimize interference to previous tasks. This can take the form of a direct penalty on changes to model parameters weighted by some importance metric, as in EWC \cite{ewc}. Alternatively, the penalty could functionally regularize model updates such that behavior on previous tasks is preserved. This usually takes the form of a distillation objective between old and new model states \cite{lwf}. Hypernetworks \cite{hypernet, hypernetcl} present a new spin on this by sequentially learning to generate task-specific networks, conditioned on corresponding task embeddings. Forgetting is mitigated via distillation by ensuring the new hypernetwork generates similar parameters as the old hypernetwork for previous task embeddings. \textbf{(2) Architecture-based} methods involve the use of non-overlapping sets of parameters for each task. This is either done through the use of separate networks or partitioning a single network to create task-specific sub-networks \cite{packnet} or expanding the network progressively by adding neurons to accommodate new tasks \cite{dynamic}. Such methods nullify catastrophic forgetting but come at the cost of parameter growth and inter-task knowledge transfer. \textbf{(3) Replay-based} methods relax the data restriction and allow a small subset of historical data to be stored in a buffer and replayed when training new tasks \cite{replay, replay1}. The idea is that the buffer should serve as a good approximation of past task distributions, and replaying them whenever a new task is learned should therefore mitigate forgetting. Replay-based methods mostly differ in their buffer sample selection strategy. Some methods replace the replay buffer with a generative model that is continually trained to generate historical data \cite{genreplay}.

\subsection{Machine Unlearning}
Most unlearning methods are designed to operate on offline-trained models. We review some of the latest unlearning methods in the literature and adopt them as baselines. \textbf{BadTeacher} \cite{bad_teacher} uses a random teacher network for the forget set and KL-divergence to match distributions, while retaining set training minimizes cross-entropy. \textbf{SalUn} \cite{salun} generates a gradient-based weight saliency map and modifies only the salient model weights impacted by the forget set, rather than the entire model. \textbf{SCRUB} \cite{scrub} employs a student-teacher model where the student deviates from the teacher on the forget set while retaining performance on the rest. \textbf{SSD} \cite{ssd} is a post hoc method that avoids retraining. It first selects parameters using the Fisher information matrix, then dampens their effects to ensure unlearning while preserving model performance. \textbf{GKT} \cite{gkt} uses a generator to synthesize samples for unlearning. \textbf{JiT} \cite{jit} leverages Lipschitz continuity for zero-shot unlearning by smoothing model outputs relative to input perturbations.

\subsection{Unified Solutions}
The following are unlearning methods that are designed to operate in a continual setting. \textbf{CLPU} \cite{clpu} involves learning independent networks for each task and discarding them upon request, thereby achieving unlearning. Although CLPU achieves exact unlearning, it comes at the cost of rampant parameter growth, making it unsustainable for long task sequences. \textbf{UniCLUN} \cite{uniclun} adapts BadTeacher \cite{bad_teacher} to a continual setting with a replay buffer. Distilling from a random teacher network enables forgetting, and distillation from the previous task's network helps mitigate forgetting when learning new tasks. 

Unlearning methods vary in their data requirements. BadTeacher, SCRUB, SalUN, SSD, and UniCLUN require both the forget and retain sets. JiT requires only the Forget set. GKT and the proposed method, UnCLe, are data-free unlearning methods. 

\begin{figure*}[t]
    \centering  \includegraphics[width=0.99\linewidth]{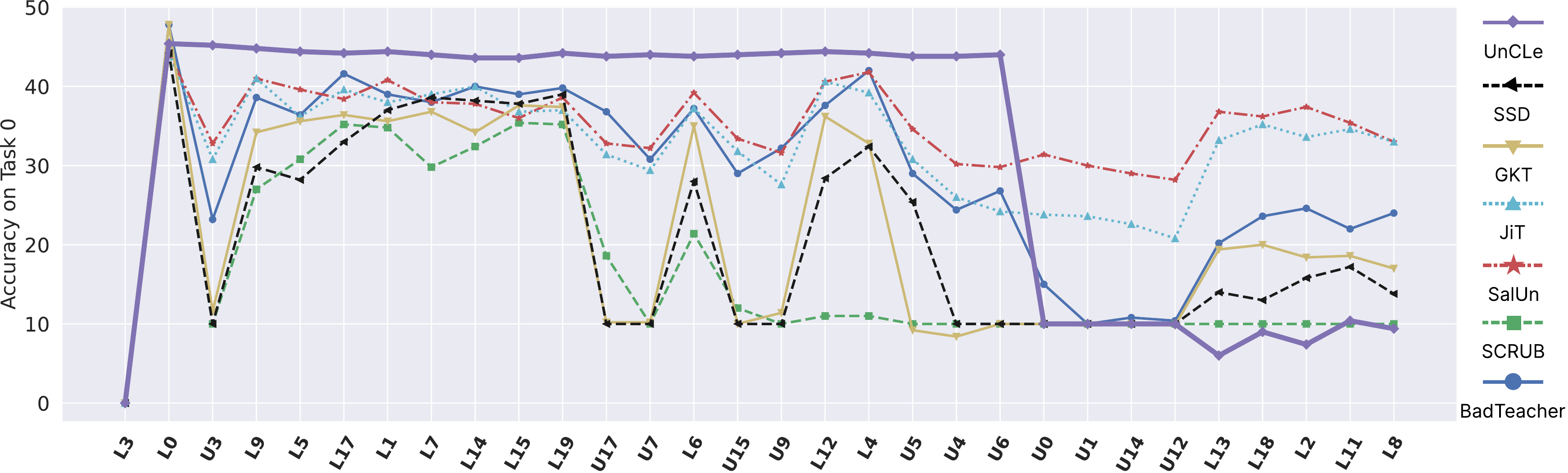}
    \caption{Plot tracking Task 0's accuracy through a sequence of learning and unlearning operations on the TinyImageNet dataset. We present more such plots in Appendix H. UniCLUN is ignored as it is effectively equivalent to our adaptation of BadTeacher.}
    \label{fig: analysis}
\end{figure*}

\section{Problem Formulation}
The goal is to continually learn and unlearn tasks. The setting involves a model encountering a sequence of requests $\mathbf{R} = \{R_i\}_{i=1}^{|\mathbf{R}|}$ where each request $R_i = (I_i, T_i, D_i)$ is a triplet comprising the instruction $I_i$, the task identifier $T_i$ and the dataset $D_i$. Given an instruction to learn, i.e., $I_i = L$, the model is to learn task $T_i = t$ through its corresponding dataset $D_i = D_t$. Note that the CL setup does not allow us to store the task-specific data from past requests. This work considers a supervised setting with each task's dataset $D_t = \{x^t_j, y^t_j\}_{j=1}^{|D_t|}$ containing $|D_t|$ input-output pairs. For an unlearn instruction $I_i = U$, the model is required to unlearn a task $T_i = t$ in the absence of the task data $D_i = \{\}$. This data-free unlearning requirement is a key characteristic of the continual setting, which assumes that once a task is learned, the corresponding data is foregone.

\section{Analysis of Contemporary Unlearning} \label{sec: analysis}

In this section, we analyze how current unlearning algorithms fare in a continual setting. Given the data requirements of most current unlearning algorithms, we adopt a replay-based CL strategy to adapt them to a continual setting. The replay buffer enables both unlearning and forgetting mitigation. For our replay strategy, we choose the ubiquitous DER++ \cite{derpp} that blends functional regularization with replay. DER++ stores the previous model's output logits along with the inputs and labels in the replay buffer. When learning a new task, in addition to minimizing the classification loss, the error between the current and the stored logits is minimized as well. 
Formally, the learning objective is as follows:
\begin{equation}
\begin{split}
    &\arg\min_{\theta,\phi}\mathbf{E}_{(x,y) \sim \mathcal{D}_t} \mathcal{L}_{CE}(y, f_\theta(h_\phi(x))) \\
    &+ \alpha \cdot \mathbf{E}_{(x',y') \sim \mathcal{R}} \mathcal{L}_{CE}(y', f_\theta(h_\phi(x'))) \\
    &+ \beta \cdot \mathbf{E}_{(x'',z'') \sim \mathcal{R}} ||z'' - f_\theta(h_\phi(x''))||_2^2
\end{split}
\end{equation}
where the first term is the current task $t$'s classification loss between the ground truth labels and the model $f_\theta(h_\phi(.))$ outputs, the second term is the replay buffer $\mathcal{R}$'s classification loss and the last term is the Euclidean distance between the feature extractor $h_\phi(.)$ logit outputs and the stored logits $z$. $\alpha$ and $\beta$ are hyperparameters to balance the current task and replay. The unlearning objective varies with each unlearning algorithm. In addition, the replay buffer would no longer contain samples from the task that is being unlearned. 

We apply our CL-adapted unlearning baselines to a random sequence of learning and unlearning operations as denoted in Figure \ref{fig: analysis}'s X axis. We choose the TinyImageNet dataset and split the 200-class dataset 20 ways, resulting in 20 tasks of 10 classes each. For brevity, we track the accuracy of a single task (Task 0) through the entire sequence of operations to study its behavior in response to each operation. 

In the first operation $L_3$, we see that Task 0's accuracy is zero as it has not been learned yet. The second operation, $L_0$, results in a sharp increase in accuracy as Task 0 is learned. The third operation $U_3$ is an unlearning operation that is supposed to only impact Task 3. However, we see that all the baselines witness sharp drops in Task 0's accuracy of varying magnitudes. This hints at the current methods' incapacity to handle CL environments. The next operation $L_9$ is a learning operation that results in Task 0's accuracy partially recovering among all baselines. This is due to the presence of data from Task 0 in the replay buffer that enables the model to partially relearn what it has previously unlearned. The subsequent learning operations from $L_5$ to $L_{19}$ show more or less stable accuracies across the board until $U_{17}$, which once again plunges Task 0's accuracy. Accuracy degrades further with another consecutive unlearning operation $U_7$. This pattern of accuracy degradation and recovery repeats until Task 0 is finally unlearned. At $U_0$, we witness baselines differ in their behavior. SSD, GKT, and SCRUB's accuracies stay largely the same at 10 (equivalent to a random guess, given 10 classes a task), having already degraded in the prior unlearning operations. BadTeacher's Task 0 accuracy dips, but not fully, until only after the next unlearning operation. SalUn and JiT show a negligible impact. Note that as a task is unlearned, its corresponding dataset is removed from the replay buffer. In this case, after $U_0$, samples from Task 0 are removed from the buffer. Moving further, the subsequent sequence of unlearning operations till $U_{12}$ sees the accuracies largely unchanged. The tail end of the sequence sees a line of learning operations. Surprisingly, Task 0's accuracy again recovers across all baselines (excluding SCRUB, which seems to have completely collapsed midway through the sequence). Despite the removal of replay data, accuracy improves due to backward transfer of knowledge from learning subsequent tasks that are similar to the unlearned task. Standard unlearning algorithms do not take into account the possibility of future learning and therefore do not offer any safeguards against such performance recovery. 

In summary, we identify two phenomena that are unique to continual settings where traditional unlearning algorithms falter: 
\begin{enumerate}
\item Unlearning operations spill into tasks other than the targeted task, resulting in performance degradation across all learned tasks. 
\item Subsequent learning operations lead to unlearned tasks relapsing and partially recovering lost performance.
\end{enumerate}

Our empirical observations thus demonstrate that current unlearning algorithms are ill-equipped to deal with continual settings and that bespoke frameworks to tackle both continual learning and unlearning are required. 

\section{Methodology}

\begin{figure}[t]
    \centering  \includegraphics[width=0.99\linewidth]{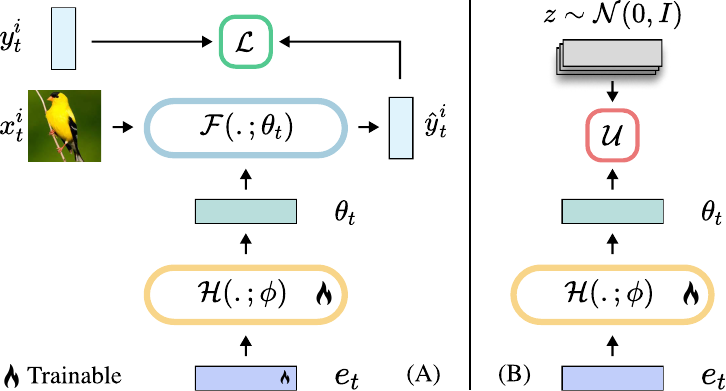}
    \caption{A schematic of the architecture showcasing the learning process \textit{(A)} and the unlearning process \textit{(B)}. Here, $\mathcal{L}$ represents the objective during learning and $\mathcal{U}$ represents the unlearning objective.}
    \label{fig: data_requirement}
\end{figure}

The goal is to build a unified framework that is capable of both continual learning and unlearning. As a result, the framework should simultaneously satisfy both continual learning and unlearning requirements. CL frameworks endeavor to minimize catastrophic forgetting while maximizing knowledge transfer between tasks. On the other hand, unlearning frameworks strive for completeness, specificity, and permanence. Moreover, unlearning has to be now data-free as the continual setting relinquishes past tasks' data. This marks a stark departure from conventional unlearning settings. 

A trivial way to address all the aforementioned challenges in tandem is through parameter isolation. Consider a setting wherein a separate model is learned for each task. Such a scenario avoids catastrophic forgetting altogether, as no interference can occur between tasks since they occupy disjoint parameter sets. A task unlearning operation would simply mean discarding the appropriate model. The modularity of the framework ensures the exact unlearning \cite{unlearning_survey} of a task: guaranteeing completeness, specificity, and permanence. It is complete since no other network contains an unlearned task's information other than the discarded network. It is specific as discarding a particular network has no ill bearing on other tasks' networks, and finally, its effects are permanent since there is no way to recover lost information through the remaining networks. The obvious downside to this ideal framework is the substantial increase in parameters with every new task, violating the limited memory assumptions in CL settings. Furthermore, parameter isolation also skimps on knowledge transfer between tasks, which has proven massively beneficial in continual learning. 

\subsection{An Unlearning Framework for Continual Learning}

We propose an Unlearning Framework for Continual Learning (UnCLe) that achieves parameter efficiency and knowledge transfer while ensuring desired continual learning and unlearning properties. The proposed approach forgoes maintaining task-specific networks and generates them instead through a hypernetwork \cite{hypernet}. A hypernetwork $\mathcal{H}(.;\phi)$ is a neural network that generates parameters of another neural network termed the main network. The hypernetwork can parameterize different main networks when conditioned on different learnable embeddings. UnCLe employs hypernetworks to generate unique main network parameters $\theta_t$ for each task $t$, when conditioned on corresponding task embeddings $e_t$. 

\begin{algorithm}[t]
    \caption{Learning in UnCLe}
    \label{alg:learning-algorithm}
    \textbf{Input}: Task Data $D_t$,  regularization constant $\beta$, learning epochs $E_l$
    \begin{algorithmic}[1]
        % \PROCEDURE{Learning }{$D_t$, $\beta$, $E_l$}
        \STATE $e_t =$ random\_init()
        \FOR {$j = 0$ to $E_L$}
        \FOR {each batch $(X^t_i,Y^t_i)$ in $D_t$} 
                \STATE $\theta_t = \mathcal{H}(e_t;\phi)$
                \STATE $\hat{Y_i}^t = \mathcal{F}(X^t;\theta_t)$
            \STATE $\mathcal{L}_{lrn} = \mathcal{L}_{task}(Y^t_i, \hat{Y_i}^t) + \beta \cdot \mathcal{L}_{reg}$                     
            \STATE Optimize $\{\phi, e_t\}$ w.r.t \ $\mathcal{L}_{lrn}$
        \ENDFOR
    \ENDFOR
    \STATE $\phi^* = \phi$
    % \EndProcedure
    \end{algorithmic}
\end{algorithm}
\subsubsection{Learning} 
In a CL setting, the hypernetwork encounters tasks sequentially \cite{hypernetcl}. As a result, learning to generate new task-specific parameters will inevitably lead to the catastrophic forgetting of the previous tasks. The hypernetwork is hence regularized to ensure consistent generation of previous task-specific parameters. This is achieved through a knowledge distillation-inspired objective that minimizes the difference in the generated output between the current hypernetwork and a hypernetwork frozen prior to learning the current task. The objective for learning a new task $t$ is thus formulated: 
\begin{equation}
    \label{eq: lr_obj}
    \arg\min_{\phi, e_t} \;\;\;\mathcal{L}_{task}(D_t, \mathcal{H}(e_t;\phi)) + \beta \cdot \mathcal{L}_{reg}, \;\; \text{where} \nonumber 
\end{equation}
\begin{equation}
    \label{eq: lr_reg}
    \mathcal{L}_{reg} = \frac{1}{t-1}\sum_{\substack{t'=1 }}^{t-1} \left\lVert \mathcal{H}(e_{t'};\phi^*) - \mathcal{H}(e_{t'};\phi) \right\rVert^2_2
\end{equation}
$\mathcal{L}_{task}$ is the task-specific loss (cross-entropy for the classification tasks) computed for the data set $D_t$ associated with task $t$, $\beta$ is a hyperparameter controlling the strength of regularization, and $\mathcal{L}_{reg}$ is the distillation-inspired regularization term. The hypernetwork parameters are initialized via the Hyperfan initialization \cite{hyperfan}, which ensures that the hypernetwork generates main network parameters that are, in turn, Kaiming He initialized \cite{kaiming}. The parameter efficiency problem is therefore addressed through the hypernetwork framework, as we only need to store the hypernetwork parameters and the low-dimensional task embeddings. The addition of new embeddings with each new task accounts for a negligible growth in parameters. This framework also allows for inter-task knowledge transfer through the shared hypernetwork parameters. The learning methodology is summarized in Algorithm \ref{alg:learning-algorithm}.

\begin{algorithm}[t]
\caption{Unlearning in UnCLe}
\label{alg:algorithm}
\textbf{Input}: Forget Task $f$, Unlearning regularization constant $\gamma$, Burn-In $E_u$, Number of noise samples $n$
\begin{algorithmic}[1]
\FOR{$j = 1$ to $E_u$} 
    \STATE $\mathcal{L}_{fgt} = 0$
    \FOR{$k = 1$ to $n$}
        \STATE Sample noise $z \sim \mathcal{N}(0, \mathbf{I}_d)$
        \STATE Update $\mathcal{L}_{fgt} \leftarrow \mathcal{L}_{fgt} + \frac{1}{n} \|\mathcal{H}(e_f; \phi) - z\|_2^2$
    \ENDFOR
    \STATE $\mathcal{L}_{ul} = \gamma \cdot \mathcal{L}_{fgt} + \mathcal{L}_{reg}$
    \STATE Optimize $\phi$ with respect to $\mathcal{L}_{ul}$
\ENDFOR
\end{algorithmic}
\end{algorithm}

\subsubsection{Unlearning}
A model that has unlearned a task is required to behave in a way that is similar to a model that has never been trained on that particular task. UnCLe realizes this goal by reverting the forget-task parameters generated by the hypernetwork back to a standard normal initialization. During the learning phase, given a task $t$ and its associated embedding $e_t$, the hypernetwork learns to generate parameters $\theta_t$ that minimize the empirical risk on the dataset $D_t$ corresponding to task $t$. Similarly, when instructed to unlearn $t$, we enforce the hypernetwork to learn to map the embedding $e_t$ back to zero-centered Gaussian noise. This is attained through minimizing the error between the generated parameters $\theta_t$ and a Gaussian noise sample $z$. This has the desired effect of unlearning the task $t$ as the hypernetwork conditioned on $e_t$ no longer generates meaningful parameters $\theta_t$ but rather noise that is akin to a randomly initialized network. As with learning a new task, unlearning too can cause catastrophic forgetting of the retain-tasks. To confine unlearning to the forget-task and to safeguard retain-tasks, we adopt a similar regularization term in the objective that enforces consistent parameter generation for the retain-tasks. Overall, the unlearning objective for a forget-task $f$ is formulated as:
\begin{equation} \label{eqn:gaussian_forget}
    \arg\min_{\phi} \ \gamma \cdot \left( \frac{1}{n} \sum_{i=1}^n \|\mathcal{H}(e_f;\phi) - z_i\|^2_2 \right) + \mathcal{L}_{reg}
\end{equation}
where $z_i$  are samples from a zero-centered Gaussian. The hyperparameter $\gamma$ controls the strength of regularization. We average the MSE over a batch of $n$ different noise samples to prevent the hypernetwork from memorizing any particular noise sample, which can impact generalization. Given an unlearning request, the hypernetwork is optimized with the aforementioned objective over a number of iterations that we term burn-in. The unlearning procedure is summarized in Algorithm \ref{alg:algorithm}.

\section{Experiments \& Results}

% \subsection{Experimental Setup}

We generate a random sequence of learning and unlearning requests and train the model continually on the corresponding task datasets. Descriptions of various sequences and seeds used are found in Appendix C.

\subsubsection{Implementation} 
We use a fully connected Hypernetwork with 3 hidden layers of dimensions 128, 256, and 512. The hypernetwork generates ResNet18 parameters in the case of Permuted MNIST experiments and ResNet50 elsewhere to demonstrate scalability. We also include ResNet18 results on other datasets in Appendix H. To improve efficiency, the parameters are generated in chunks. We defer details on the chunking mechanism to Appendix B. We use the Adam optimizer \cite{adam} for both learning and unlearning, with a learning rate of 0.001 and a scheduler. Details regarding learning rate schedule, batch size, and training epochs are deferred to Appendix C. All training was done on a single V100 GPU.

\subsubsection{Datasets} 
We conduct experiments with four datasets, namely, \textbf{Permuted-MNIST} \cite{permuted_mnist}, \textbf{5-Tasks} \cite{kmnist, fashion_mnist, mnist, notmnist, svhn}, \textbf{CIFAR-100} \cite{cifar100}, and \textbf{Tiny ImageNet} \cite{tinyimagenet}. Apart from 5 Tasks, which comprise 5 classification tasks of 10 classes each, all the other datasets entail 10 tasks, each with 10 classes. Details are deferred to Appendix C.

\subsubsection{Hyperparameters}
When learning, tuning $\beta$ plays a crucial role in balancing stability and plasticity. The values for $\beta$ were obtained through a search detailed in Appendix C. Conversely, the intensity of unlearning is controlled by two variables: the regularization hyperparameter $\gamma$ and the burn-in period $E_u$. As with $\beta$ in learning, $\gamma$ balances the remembrance and the forgetting terms of the unlearning objective. The burn-in, $E_u$, controls the number of iterations the hypernetwork is optimized over the unlearning objective. A range of values for $\gamma$ and $E_u$ was explored as detailed in Appendix C.  

\begin{figure}[t]
    \centering  \includegraphics[width=0.99\linewidth]{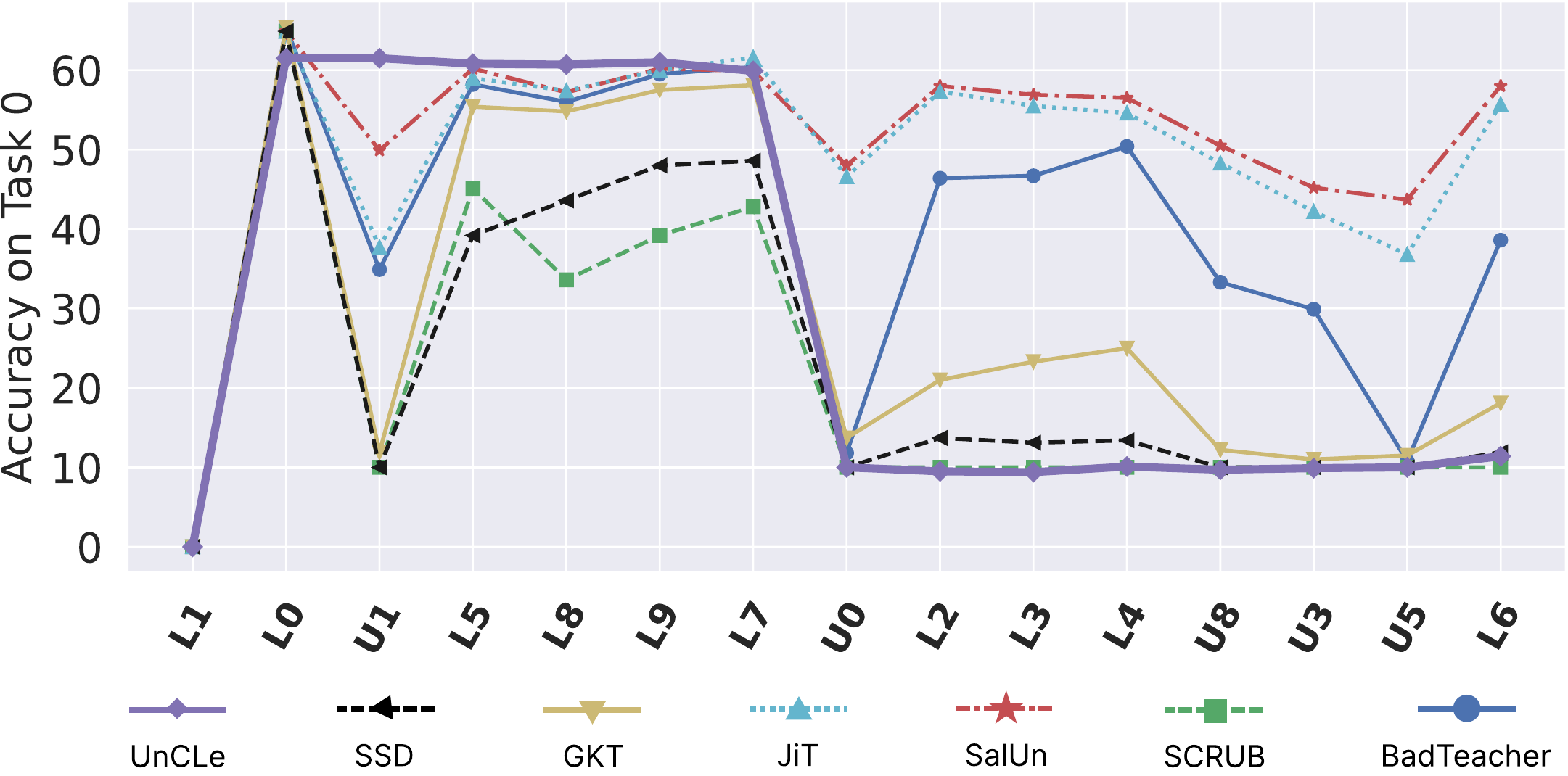}
    \caption{Plot tracking Task 0's accuracy through a sequence of learning and unlearning operations on the CIFAR 100 dataset.}
    \label{fig: c100_analysis}
\end{figure}

\begin{table}[]
\centering
\begin{tabular}{@{}c|cc|cc@{}}
\toprule
\multirow{2}{*}{\textbf{Framework}} & \textbf{RA}               & \textbf{FA}               & \textbf{RA}               & \textbf{FA}            \\ \cmidrule(l){2-5} 
                           & \multicolumn{2}{c|}{\textbf{Permuted MNIST}} & \multicolumn{2}{c}{\textbf{CIFAR-100}}    \\ \midrule
BadTeacher                 & 92.17            & 10.20             & 61.75            & 14.57         \\
SCRUB                      & 9.970             & 9.840             & 29.45            & 10.06         \\
SalUn                      & 92.39            & 59.24            & \textbf{66.56}            & 44.89         \\
JiT                        & 86.93            & 29.90             & 65.94            & 43.93         \\
GKT                        & 89.77            & 12.13            & 57.05            & 10.70          \\
SSD                        & 86.32            & 9.930             & 43.27            & \textbf{10.00}            \\
CLPU                       & 91.73            & -            & 63.10             & -         \\
\textbf{UnCLe (Ours)}                    & \textbf{96.87}            & \textbf{10.00}               & 62.65            & \textbf{10.00}            \\ \midrule
\multicolumn{1}{l|}{}      & \multicolumn{2}{c|}{\textbf{5-Tasks}}     & \multicolumn{2}{c}{\textbf{TinyImageNet}} \\ \midrule
BadTeacher                 & 54.38            & 8.550             & 52.79           & 15.73         \\
SCRUB                      & 9.160             & 12.97            & 19.48            & 10.00            \\
SalUn                      & 74.75            & 25.02            & \textbf{58.44}            & 36.02         \\
JiT                        & 19.10             & 17.20             & 57.86            & 32.70          \\
GKT                        & 10.27            & 13.67            & 52.44            & 11.35         \\
SSD                        & 8.850             & 10.36            & 39.78            & 10.37         \\
CLPU                       & 85.00               & -                & 54.90             & -         \\
\textbf{UnCLe (Ours)}                    & \textbf{94.12}            & \textbf{10.04}            & 55.24            & \textbf{10.00}        \\
\bottomrule
\end{tabular}
\caption{A comparison of Retain-task (Higher, the better) and Forget-task accuracies (Closer to random (10\%), the better). Presented results are from Request Sequence 1 averaged over 3 runs with different seeds (Appendix C).}
\label{tab:accuracy}
\end{table}

\subsection{Discussion}

In our prior analysis, we discussed how current unlearning methods are deficient in a continual setting. Figure \ref{fig: analysis} details the particular instances where they fail. Figure \ref{fig: analysis} also describes UnCLe's accuracy trajectory through the sequence of learning and unlearning operations. We find that Task 0's accuracy spikes after its learning operation. Unlike other baselines, where Task 0's accuracy fell due to other unlearning operations, UnCLe maintains Task 0's accuracy stably until it is unlearned. The unlearning operation swiftly reduces Task 0's accuracy to 10\% (equivalent to a random guess, given 10-way classification), and the accuracy stays at 10\% or below even in the face of subsequent learning operations. UnCLe resists unlearning operations spilling over to other tasks and also prevents unlearned tasks from relapsing due to future learning operations. Figure \ref{fig: c100_analysis} shows similar patterns in the CIFAR 100 dataset, where UnCLe stably learns and unlearns at specified operations without deviating much during other intermediate and future operations. The same cannot be said for the other baselines, which show the same unpredictable behavior as before.  

Summarily, we compare UnCLe and the baselines across datasets in Table \ref{tab:accuracy}. We use \textbf{Retain-task Accuracy RA)} and \textbf{Forget-task Accuracy (FA)} as metrics, measuring the average accuracy of the retained and the forgotten tasks, respectively, at the end of the experimental sequence. These are analogous to Retain-set and Forget-set accuracy, which are the standard metrics in offline unlearning settings. Across baselines, we find that UnCLe achieves an FA equivalent or close to random, indicating complete forgetting of unlearned tasks. CLPU's FA cannot be measured as unlearning in CLPU implies discarding the corresponding task network. Other baselines show high FA due to accuracy relapsing on account of future learning operations. In terms of RA, UnCLe performs competitively with the baselines.

As Figures \ref{fig: analysis}\&\ref{fig: c100_analysis} have shown, RA and FA alone are ill-poised to paint a full picture of the complexities of continually learning and unlearning. An unlearning operation at the end of the sequence would plummet the RA of most of, if not all, the baselines. Similarly, a steady sequence of learning operations at the end would further increase the FA. We therefore need a better summary statistic beyond accuracy to paint a more holistic picture. To that end, we propose two new metrics: Spill and Relapse, each measuring a different aspect of the effects of unlearning in continual settings. 
\textbf{Spill} measures unlearning specificity and is calculated after each unlearning operation. Spill measures the effect of an unlearning operation on all other tasks other than the targeted task. If $u$ is the index of the unlearning operation on a task $t_f$, its spill is defined as: 
\begin{equation}
    S_u = \sum_{t \neq t_f} | a^t_{u-1} - a^t_u |
\end{equation}
\textbf{Relapse} measures unlearning permanence. It measures the magnitude of difference between a task's accuracy right after it is unlearned and at the end of the experimental sequence. Formally, we define relapse for each forget-task $t$ as: 
\begin{equation}
    P_t = |a^t_u - a^t_e|
\end{equation}
where $u$ denotes where the task is unlearned and $e$ denotes the end of the sequence. 
In Table \ref{tab:new_metrics}, we report the average spill and relapse across baselines and datasets.

From Table \ref{tab:new_metrics}, we see that UnCLe demonstrates the lowest spill by a large margin. GKT and SSD show the highest spill, consistent with their unstable trajectories seen in Figures \ref{fig: analysis}\&\ref{fig: c100_analysis}. The other baselines fare in between. With regards to relapse, UnCLe scores the lowest in CIFAR 100 and the second lowest in 5-Tasks. SCRUB demonstrates the lowest relapse in most datasets, but demonstrates poor RA, FA, and Spill. Although BadTeacher ranks well in RA and FA, it falls short on Spill and Relapse. This shows that no single metric can fully capture unlearning performance in continual settings, and we need all four metrics to properly quantify the performance of each framework. We also see that UnCLe ranks best or near best in most datasets measured by each of the four metrics. All of this confirms the need for tailored unlearning frameworks to suit the continual setting, as conventional unlearning methods are simply not designed to anticipate such repeated learning and unlearning operations.

We include further results on more experimental sequences with mean and variance obtained over multiple seed runs in Appendix H.

\subsubsection{Membership Inference Attack} A Membership Inference Attack (MIA) on UnCLe results in a score of 50\%. A 50\% MIA value indicates the attack is no better than random guessing, meaning the model has effectively mitigated membership inference risks. We include a detailed description of MIA and further results in Appendix G. 

\begin{table}[]
\centering
\begin{tabular}{@{}c|cc|cc@{}}
\toprule
\multirow{2}{*}{\textbf{Framework}} & \textbf{Spill}            & \textbf{Relapse}          & \textbf{Spill}           & \textbf{Relapse}        \\ \cmidrule(l){2-5} 
                         & \multicolumn{2}{c|}{\textbf{Permuted MNIST}} & \multicolumn{2}{c}{\textbf{CIFAR-100}}    \\ \midrule
BadTeacher               & 33.05           & 30.51          & 14.61         & 23.57        \\
SCRUB                    & 17.50          & \textbf{0.149}           & 28.02           & 0.513         \\
SalUn                    & 17.82         & 14.82          & 7.547          & 9.233         \\
JiT                      & 35.83           & 23.73          & 12.16           & 10.73        \\
GKT                      & 51.86          & 5.431           & 32.96           & 16.55        \\
SSD                      & 51.18          & 2.586            & 30.02           & 5.80            \\
\textbf{UnCLe (Ours)}                  & \textbf{0.044}            & 8.703           & \textbf{0.640}            & \textbf{0.507}         \\ \midrule
\multicolumn{1}{l|}{}    & \multicolumn{2}{c|}{\textbf{5-Tasks}}     & \multicolumn{2}{c}{\textbf{TinyImageNet}} \\ \midrule
BadTeacher               & 54.62          & 1.566            & 9.317          & 8.706         \\
SCRUB                    & 44.67          & \textbf{0.00}          & 9.450            & \textbf{0.739}         \\
SalUn                    & 15.86          & 0.788           & 5.128          & 7.200            \\
JiT                      & 63.94          & 0.342           & 7.617          & 9.344         \\
GKT                      & 83.08          & 1.512           & 16.36         & 9.594         \\
SSD                      & 79.15          & 0.661           & 14.21         & 5.828         \\
\textbf{UnCLe (Ours)}                 & \textbf{0.023}           & 0.539           & \textbf{0.722}          & 2.233         \\ \bottomrule
\end{tabular}
\caption{A comparison of Spill and Relapse (Lower, the better). Presented results are from Request Sequence 1 averaged over 3 runs with different seeds (Appendix C).}
\label{tab:new_metrics}
\end{table}

\subsubsection{Hypernetwork-based Baselines}
In addition to using DER++ to adapt our unlearning baselines to a CL setting, we also pair them with a hypernetwork to understand how unlearning performance differs when paired with a different CL strategy. We delegate the results of this study, alongside other trivial baselines, to Appendix F. 

\subsubsection{Alternative Noising Strategies} In UnCLe, the way the hypernetwork's parameter output for the forget-task is aligned with noise is central to the unlearning procedure. In addition to the noising strategy discussed in the methodology, we explore alternative noising strategies for our unlearning mechanism, such as Fixed-noise Alignment and $L^2$-Norm Reduction, and study their impact on the unlearning process. We find that UnCLe's sampling average-based noise alignment fares better in comparison. We explore alternative noising strategies in detail and present comparative results in Appendix E.

\subsubsection{Saturation Alleviation}

In a continual setting, as the model is exposed to an increasing number of tasks, it gets saturated to a point where it loses all plasticity, rendering it unable to learn new tasks. As stated, UnCLe's unlearning objective restores the learned task-specific classifier parameters to a randomly initialized state, akin to a Kaiming He initialization. This restores the hypernetwork's plasticity, allowing it to learn new tasks again. We test this hypothesis through a comparison between a hypernetwork that only learns tasks and UnCLe, which both learns and unlearns. The results in Table \ref{tab:only_learning_vs_uncle} demonstrate that relinquishing unnecessary tasks improves the learnability of newer tasks, particularly in more complex datasets and longer sequences. The simple settings of Permuted-MNIST and 5-Tasks do not show drastic improvement as they have not attained saturation yet. This highlights how unlearning not only serves as a privacy tool but also extends the longevity and maintainability of CL models by actively removing obsolete information. Figure \ref{fig: sat_alleviation} compares how unlearning obsolete tasks enables higher accuracies in later tasks when compared to a baseline that doesn't unlearn. We defer further details on saturation alleviation to Appendix D.

\begin{table}[ht]
        \centering
        \begin{tabular}{@{}c|cc@{}}
        \toprule
        \textbf{Methods}      & \textbf{Permuted-MNIST} & \textbf{5-Tasks} \\ \midrule
        Only Learning  & 96.84                   & \textbf{94.12}      \\
        \textbf{UnCLe} & \textbf{96.87}          & \textbf{94.12}      \\ \midrule
                       & \textbf{Tiny-ImageNet}  & \textbf{CIFAR-100}  \\ \midrule
        Only Learning  & 50.47                   & 60.51               \\ 
        \textbf{UnCLe} & \textbf{55.24}          & \textbf{62.65}      \\ \bottomrule
        \end{tabular}
        \caption{A comparison of average accuracy across the retained tasks from UnCLe versus a sequence with just learning tasks, demonstrating that unlearning old tasks helps learn new tasks better.}
        \label{tab:only_learning_vs_uncle}
\end{table}

\begin{figure}[ht]
    \centering
    \includegraphics[width=\linewidth]{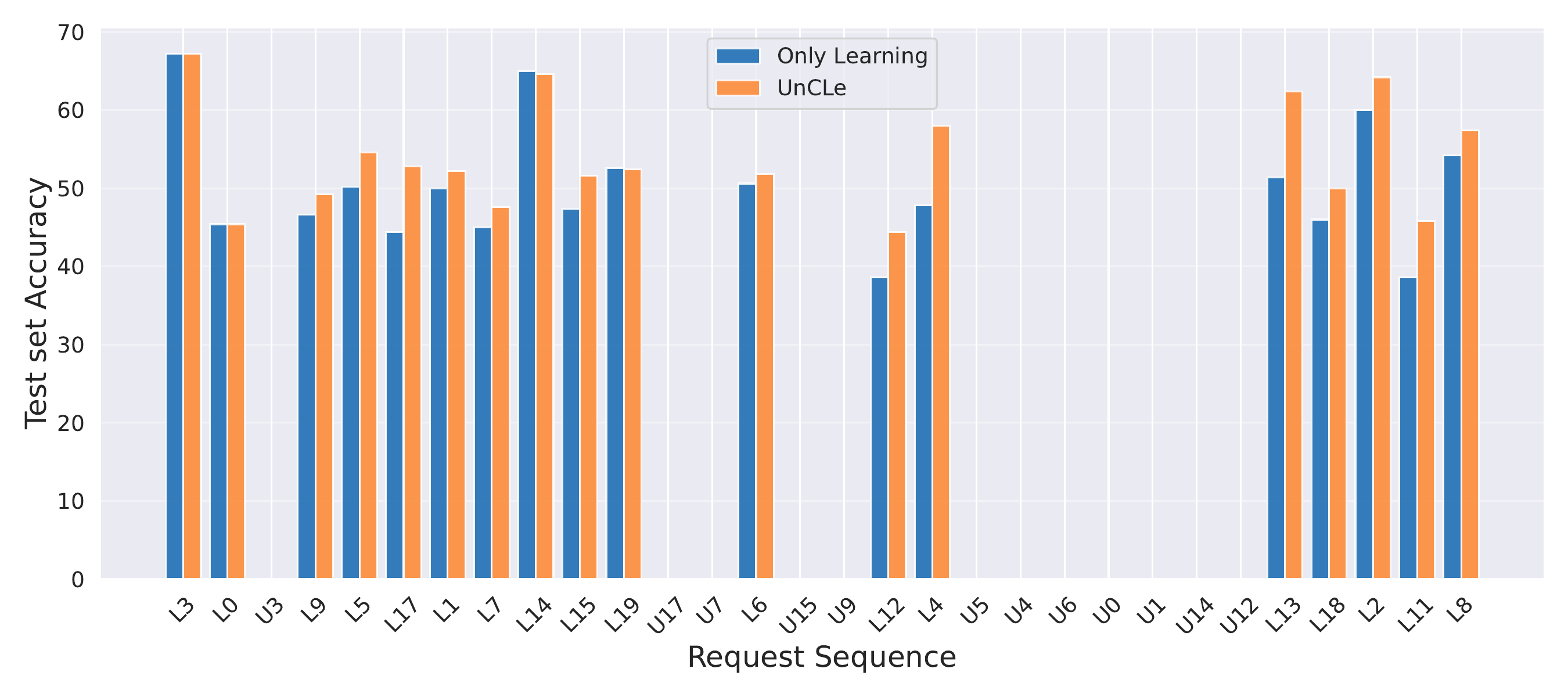}
    \caption{A comparison between the individual task accuracies of UnCLe and a trivial baseline that only performs learning on the TinyImageNet dataset.}
    \label{fig: sat_alleviation}
\end{figure}

\subsubsection{Limitations}
Although UnCLe is capable of learning and unlearning tasks continually in any arbitrary manner, it currently lacks the flexibility to individually learn and unlearn classes within each task. We opine that future works should address a class-incremental learning and unlearning setting. 

% Our analysis of current unlearning methods shows the susceptibility of accuracy relapse post-unlearning due to backward knowledge transfer from subsequent learning operations. This has implications beyond a continual setting, as this can be used even in offline settings to undo unlearning operations when similar data is available. This poses a security concern that requires further investigation and more robust unlearning methods like UnCLe.

\section{Conclusion}

Our study of existing unlearning algorithms in continual settings reveals concerning performance degradation among retained tasks. Furthermore, we find that unlearned tasks are prone to relapse when the model subsequently learns similar tasks. Recognizing such shortcomings, we propose a tailored solution to continual learning and unlearning with UnCLe. Our experiments showcase UnCLe's effectiveness in addressing current limitations, such as unlearning spill and relapse. Furthermore, we demonstrate that unlearning obsolete tasks helps in alleviating model saturation, paving the way for more flexible CL frameworks. 

\bibliography{main}

% Check whether the conference requires a reproducibility checklist to be included in the paper.
% If so, you can uncomment the following line and adjust the path to include it.
% \input{../../ReproducibilityChecklist/LaTeX/ReproducibilityChecklist.tex}

\clearpage

\appendix
\section*{Appendix}
\section{Table of Contents}

\begin{itemize}
    \item \textbf{A: Broader Impact}
    \item \textbf{B: Hypernetworks}
    \item \textbf{C: Experiments}
    \item \textbf{D: Saturation Alleviation}
    \item \textbf{E: Alternative Noising Strategies}
    \item \textbf{F: Other Baselines}
    \item \textbf{G: Membership Inference Attack}
    \item \textbf{H: More Results}
\end{itemize}

\section{A: Broader Impact}

This work on continual learning and unlearning (UnCLe) has significant implications for responsible AI deployment and governance. Our approach enables more privacy-preserving AI systems by allowing models to selectively forget sensitive or personal information while maintaining their overall capabilities. This addresses growing regulatory requirements like the "right to be forgotten" and helps organizations comply with data protection laws. The ability to unlearn obsolete or harmful content also supports efforts to mitigate bias and remove problematic behaviors from deployed models without requiring complete retraining.

The demonstrated reduction in model saturation through strategic unlearning could lead to more efficient and adaptable AI systems. Nevertheless, the capacity for models to "relapse" and recover supposedly forgotten information highlights the need for robust verification mechanisms and unlearning algorithms.

\section{B: Hypernetworks}

\begin{figure}[ht]
    \centering
    \includegraphics[width=0.80\linewidth]{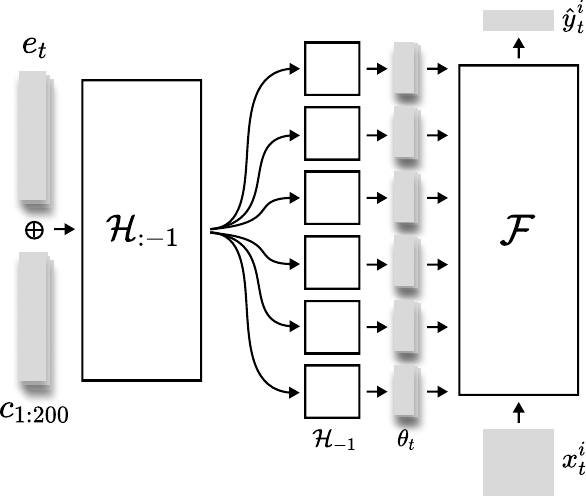}
    \caption{Schematic of the architecture showcasing the task $e_{T_t}$ and chunk embeddings $c$, the hypernetwork and its various heads $\mathcal{H}$, the generated parameters $\theta$, the ResNet classifier $\mathcal{F}$ and, the input image $x^i_t$ and the predicted output $\hat{y}^i_t$.}
    \label{fig:hnet-schematic}
\end{figure}

The large size of ResNet parameters causes the hypernetwork's last layer to become excessively large. To mitigate this, we partition the main network parameters into smaller chunks and generate them separately, significantly reducing the hypernetwork’s size. The schematic of this chunked hypernetwork architecture is shown in Figure \ref{fig:hnet-schematic}.  

The hypernetwork generates large networks in chunks by conditioning on unique chunk embeddings, similar to how it generates task-specific networks using task embeddings. These chunk embeddings, concatenated with task embeddings, form unique task-chunk pairs that generate corresponding parameter chunks. Learned via backpropagation, chunk embeddings are frozen after the first task to prevent catastrophic forgetting. We set both chunk and task embedding dimensions to 32 and found that dividing each task-specific network into 200 chunks balances efficiency and performance.  

To further optimize parameter generation, the hypernetwork’s final layer is divided into specialized heads, each responsible for a specific parameter type: network weights, batch normalization parameters, or residual connection parameters. This separation prevents redundancy and reduces computational overhead. The chunk-based generation seamlessly integrates with these heads, ensuring each chunk receives only the necessary parameters.  

This design enhances parameter efficiency, maintaining a manageable hypernetwork size even for large architectures like ResNet18 and ResNet50. It balances scalability, modularity, and efficiency, making it well-suited for generating complex networks.

\section{C: Experiments }

\subsection{Operation Sequences}

On each dataset, we perform experiments over three unique sequences of learning and unlearning requests generated through random seeds. Experiments on the Five Datasets benchmark are performed over sequences of 7 requests. For Permuted-MNIST and CIFAR-100 datasets, we utilize sequences of 15 requests, and for the Tiny-ImageNet dataset, we experiment with long 30-request sequences. The sequences used are presented in Table \ref{tab: seq_table}.

\begin{table*}
\centering
\small
\begin{tabular}{@{}c|c|l@{}}
\toprule
\textbf{Datasets} &
  \textbf{Seq Nos} &
  \textbf{Sequences} \\ \midrule
\multirow{3}{*}{\textbf{\begin{tabular}[c]{@{}c@{}}5-Tasks\\ (7 requests)\end{tabular}}} &
  \textbf{1} &
  L0 $\rightarrow$ L1 $\rightarrow$ U0 $\rightarrow$ L2 $\rightarrow$ L3 $\rightarrow$ L4 $\rightarrow$ U1 \\ \cmidrule(l){2-3} 
 &
  \textbf{2} &
  L3 $\rightarrow$ L4 $\rightarrow$ L2 $\rightarrow$ L0 $\rightarrow$ L1 $\rightarrow$ U3 $\rightarrow$ U0 \\ \cmidrule(l){2-3} 
 &
  \textbf{3} &
  L0 $\rightarrow$ L2 $\rightarrow$ U0 $\rightarrow$ L4 $\rightarrow$ L3 $\rightarrow$ U2 $\rightarrow$ U4 \\ \midrule
\multirow{3}{*}{\textbf{\begin{tabular}[c]{@{}c@{}}Permuted-MNIST \\ \&  CIFAR-100\\ (15 requests)\end{tabular}}} &
  \textbf{1} &
  L1 $\rightarrow$ L0 $\rightarrow$ U1 $\rightarrow$ L5 $\rightarrow$ L8 $\rightarrow$ L9 $\rightarrow$ L7 $\rightarrow$ U0 $\rightarrow$ L2 $\rightarrow$ L3 $\rightarrow$ L4 $\rightarrow$ U8 $\rightarrow$ U3 $\rightarrow$ U5 $\rightarrow$ L6 \\ \cmidrule(l){2-3} 
 &
  \textbf{2} &
  L6 $\rightarrow$ L7 $\rightarrow$ L2 $\rightarrow$ L1 $\rightarrow$ L0 $\rightarrow$ U1 $\rightarrow$ L9 $\rightarrow$ U7 $\rightarrow$ U2 $\rightarrow$ U0 $\rightarrow$ L4 $\rightarrow$ U4 $\rightarrow$ L8 $\rightarrow$ U6 $\rightarrow$ L5 \\ \cmidrule(l){2-3} 
 &
  \textbf{3} &
  L7 $\rightarrow$ L1 $\rightarrow$ L2 $\rightarrow$ L8 $\rightarrow$ L0 $\rightarrow$ U1 $\rightarrow$ L3 $\rightarrow$ L6 $\rightarrow$ U3 $\rightarrow$ U2 $\rightarrow$ L4 $\rightarrow$ L5 $\rightarrow$ U8 $\rightarrow$ L9 $\rightarrow$ U7 \\ \midrule
\multirow{3}{*}{\textbf{\begin{tabular}[c]{@{}c@{}}Tiny-ImageNet\\ (30 requests)\end{tabular}}} &
  \textbf{1} &
  \begin{tabular}[c]{@{}l@{}}L3 $\rightarrow$ L0 $\rightarrow$ U3 $\rightarrow$ L9 $\rightarrow$ L5 $\rightarrow$ L17 $\rightarrow$ L1 $\rightarrow$ L7 $\rightarrow$ L14 $\rightarrow$ L15 $\rightarrow$ L19 $\rightarrow$ U17 $\rightarrow$ U7 $\rightarrow$ \\  $\rightarrow$ L6 $\rightarrow$ U15 $\rightarrow$ U9 $\rightarrow$ L12 $\rightarrow$ L4 $\rightarrow$ U5 $\rightarrow$ U4 $\rightarrow$ U6 $\rightarrow$ U0 $\rightarrow$ U1 $\rightarrow$ U14 $\rightarrow$ U12 $\rightarrow$ \\  $\rightarrow$ L13 $\rightarrow$ L18 $\rightarrow$ L2 $\rightarrow$ L11 $\rightarrow$ L8\end{tabular} \\ \cmidrule(l){2-3} 
 &
  \textbf{2} &
  \begin{tabular}[c]{@{}l@{}}L12 $\rightarrow$ L13 $\rightarrow$ L5 $\rightarrow$ L8 $\rightarrow$ L2 $\rightarrow$ U8 $\rightarrow$ L14 $\rightarrow$ U13 $\rightarrow$ U5 $\rightarrow$ U2 $\rightarrow$ L3 $\rightarrow$ U3 $\rightarrow$ L16 $\rightarrow$ \\  $\rightarrow$ U12 $\rightarrow$ L11 $\rightarrow$ U16 $\rightarrow$ L7 $\rightarrow$ L15 $\rightarrow$ L10 $\rightarrow$ L19 $\rightarrow$ L9 $\rightarrow$ U14 $\rightarrow$ U7 $\rightarrow$ L18 $\rightarrow$ L6 $\rightarrow$ \\  $\rightarrow$ L1 $\rightarrow$ L0 $\rightarrow$ L4 $\rightarrow$ U6 $\rightarrow$ L17\end{tabular} \\ \cmidrule(l){2-3} 
 &
  \textbf{3} &
  \begin{tabular}[c]{@{}l@{}}L2 $\rightarrow$ L7 $\rightarrow$ U2 $\rightarrow$ L18 $\rightarrow$ L12 $\rightarrow$ U7 $\rightarrow$ U18 $\rightarrow$ L16 $\rightarrow$ L0 $\rightarrow$ U16 $\rightarrow$ U0 $\rightarrow$ L13 $\rightarrow$ L4 $\rightarrow$ \\  $\rightarrow$ U12 $\rightarrow$ U13 $\rightarrow$ L9 $\rightarrow$ L19 $\rightarrow$ U19 $\rightarrow$ U4 $\rightarrow$ L10 $\rightarrow$ L14 $\rightarrow$ L5 $\rightarrow$ U5 $\rightarrow$ U10 $\rightarrow$ L11 $\rightarrow$ \\  $\rightarrow$ L1 $\rightarrow$ U1 $\rightarrow$ L17 $\rightarrow$ L6 $\rightarrow$ L3\end{tabular} \\ \bottomrule
\end{tabular}
\caption{This table provides three different sequences that are used to understand the generalizability of our approach. Here, $L\#n$ implies `learn task $n$' and $U\#n$ implies `unlearn task $n$'. Also for different task we have different sequence length showing that our method can scale to longer sequences.}
\label{tab: seq_table}
\end{table*}

\subsection{Hyperparameters}
\subsubsection{Learning Hyperparameter: Beta}
\label{sec: beta}

We perform a hyperparameter search to determine the best value for $\beta$. We perform experiments with $\beta$ values 1, 0.1, 0.01, and 0.001 and select the best-performing value for each dataset. The results of the hyperparameter search are presented in \ref{beta-tuning}:

\begin{table}[ht]
\centering
\begin{tabular}{@{}lcccc@{}}
\toprule
Dataset        & 1      & 0.1    & 0.01   & 0.001  \\
\midrule
Permuted MNIST & 96.24 & \textbf{96.68} & 96.64 & 96.52 \\
5-Tasks  & 94.46 & 94.42  & 94.13 & \textbf{94.54} \\
CIFAR-100      & 48.58  & \textbf{72.16}  & 52.62  & 15.72  \\
TinyImageNet   & 34.33  & 35.74  & \textbf{53.7}   & 48.49 \\
\bottomrule
\end{tabular}
\caption{Results of tuning hyperparameter $\beta$. The highest average accuracy values are highlighted in bold.}
\label{beta-tuning}
\end{table}

As is apparent, the chosen values for $\beta$ are as follows: 1e-2 for TinyImageNet, 1e-3 for Five Datasets and 1e-1 for both Permuted MNIST and CIFAR-100.

\subsubsection{Unlearning Hyperparameters: Gamma \& Burn-in}
\label{sec:gamma-burnin-appendix}

We perform a hyperparameter search to determine the ideal value for $\gamma$. Our search range comprises the $\gamma$ values 0.1, 0.01, and 0.001. Our selection of gamma is dependent on two factors, namely the Forget Set Accuracy (FA) and the Retain Set Accuracy (RA). A good unlearning algorithm should attain an FA of less than chance ($\frac{1}{c}$ where $c$ is the number of classes, in this case $10\%$). We first select all the $\gamma$ values that result in an FA $\leq 10$. We then pick the $\gamma$ that maximizes RA among those selected values. The results of the hyperparameter search are presented in Table \ref{tab:gamma-tuning}. We find that the burn-in of 100 is sufficient across datasets and we adopt it as standard in all our experiments. 

\begin{table}[ht]
\centering
\begin{tabular}{@{}c|ccc@{}}
\toprule
\textbf{Dataset} & \textbf{0.1} & \textbf{0.01} & \textbf{0.001} \\ \midrule
\multicolumn{4}{@{}l}{\textbf{Permuted-MNIST}} \\
FA & \textbf{10.412} & 10.417 & 17.907 \\
RA & 96.524 & 96.544 & \textbf{96.602} \\ \midrule
\multicolumn{4}{@{}l}{\textbf{CIFAR-100}} \\
FA & \textbf{8.000} & 10.830 & 17.190 \\
RA & 70.950 & 71.817 & \textbf{72.173} \\ \midrule
\multicolumn{4}{@{}l}{\textbf{5-Tasks}} \\
FA & 8.278 & \textbf{8.070} & 9.783 \\
RA & \textbf{92.868} & 92.779 & 92.847 \\ \midrule
\multicolumn{4}{@{}l}{\textbf{Tiny-ImageNet}} \\
FA & 10.000 & 10.000 & 10.000 \\
RA & 45.590 & \textbf{48.625} & 48.623 \\ \bottomrule
\end{tabular}
\caption{FA and RA for various $\gamma$ values across datasets, with RA shown directly below FA for each dataset.}
\label{tab:gamma-tuning}
\end{table}

The chosen $\gamma$ values are 1e-1 for 5-Tasks and 1e-2 elsewhere. 

\begin{table}[ht]
\centering
\small
\begin{tabular}{@{}c|cc|cc@{}}
\toprule
\multirow{2}{*}{\textbf{Methods}} & \textbf{FA} & \textbf{UT} & \textbf{FA} & \textbf{UT} \\ \cmidrule(l){2-5}
 & \multicolumn{2}{c|}{\textbf{CIFAR-100}} & \multicolumn{2}{c}{\textbf{Tiny-ImageNet}} \\ \midrule
\textbf{without Annealing} & 10.00 & 43.98 & 10.00 & 45.12 \\
\textbf{with Annealing}    & 10.00 & \textbf{41.70} & 10.00 & \textbf{29.63} \\ \bottomrule
\end{tabular}
\caption{A comparison of UnCLe with and without burn-in annealing.}
\label{tab:annealing_table}
\end{table}

We leverage the forward transfer observed in unlearning to enhance UnCLe's efficiency by introducing an annealing strategy for the burn-in phase. With each unlearning operation, the burn-in rate is reduced by 10\%, with a minimum of 20 iterations to ensure stability. This progressive reduction capitalizes on the model’s improved adaptability over time, significantly decreasing Unlearning Time (UT) without compromising performance. As shown in \ref{tab: annealing_table}, the Forget-Task Accuracy (FA) and Uniformity (UNI) metrics remain consistent, demonstrating that the annealing strategy maintains the quality of unlearning while optimizing computational efficiency.

We use a burn-in of 100 iterations, annealed by 10\% with each task, and a lower limit of 20 burn-in iterations.

\section{D: Saturation Alleviation}

We present additional saturation alleviation results on the TinyImageNet dataset in Figure \ref{fig:sat_a11e} where we measure the final accuracies of the tasks that are retained at the end of the sequence of operations. We compare UnCLe with a trivial baseline that only performs learning operations. We find that UnCLe consistently outperforms the baseline that only performs learning operations, demonstrating that unlearning old tasks help learn new tasks better.

\begin{figure}[h]
    \centering
\includegraphics[width=0.85\linewidth]{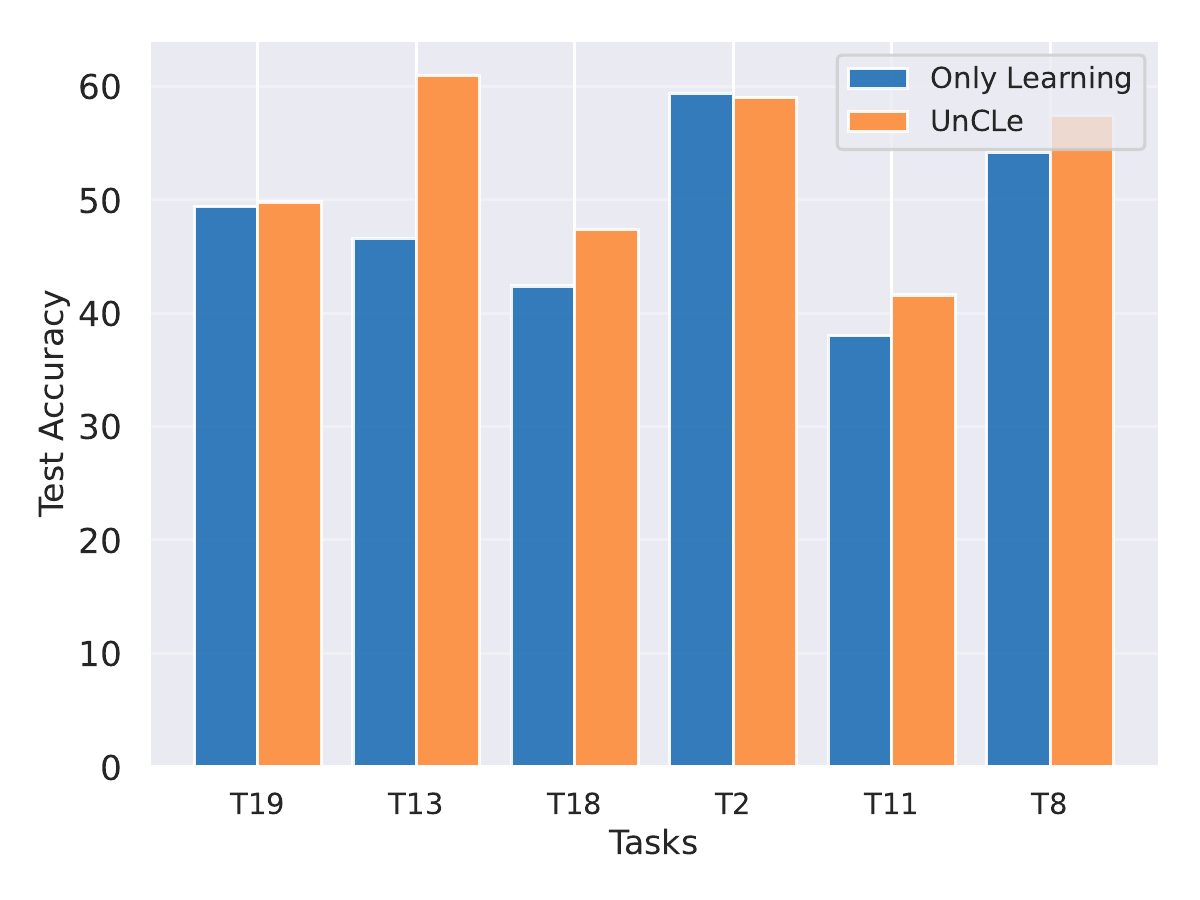}
    \caption{A comparison between the final accuracies of the tasks that remain.}
    \label{fig:sat_a11e}
\end{figure}

\section{E: Alternative Noising Strategies}
\label{sec: alt_unlearning}

\begin{table}[ht]
\centering
\small
\setlength{\tabcolsep}{4pt} % adjust if needed
\begin{tabular}{@{}c|ccccc@{}}
\toprule
\multicolumn{6}{c}{\textbf{5-Tasks}} \\ \midrule
\textbf{Methods} & \textbf{RA} & \textbf{FA} & \textbf{UNI} & \textbf{MIA} & \textbf{UT} \\ \midrule
Fixed Noise      & 83.04 & 10.94 & $-\infty$ & 50.07 & 18.74 \\
Norm Reduce      & 94.31 & 26.11 &  52.44 & 51.19 & \textbf{18.3} \\
Discard $e_f$    & \textbf{94.52} & 80.91 & $-214.0$ & 50.25 & 0.00 \\
\textbf{UnCLe}   & 94.12 & \textbf{10.04} & \textbf{100.0} & \textbf{50.01} & 33.28 \\
\midrule\midrule
\multicolumn{6}{c}{\textbf{CIFAR-100}} \\ \midrule
\textbf{Methods} & \textbf{RA} & \textbf{FA} & \textbf{UNI} & \textbf{MIA} & \textbf{UT} \\ \midrule
Fixed Noise      & 21.79 & 10.36 & $-\infty$ & 49.97 & 25.76 \\
Norm Reduce      & \textbf{62.75} & 34.42 & 41.27 & 44.13 & \textbf{25.39} \\
Discard $e_f$    & 60.21 & 20.70 & 11.21 & 46.88 & 0.00 \\
\textbf{UnCLe}   & 62.65 & \textbf{10.00} & \textbf{100.0} & \textbf{50.00} & 41.70 \\
\midrule\midrule
\multicolumn{6}{c}{\textbf{Permuted-MNIST}} \\ \midrule
\textbf{Methods} & \textbf{RA} & \textbf{FA} & \textbf{UNI} & \textbf{MIA} & \textbf{UT} \\ \midrule
Fixed Noise      & 84.55 & \textbf{9.870} & $-\infty$ & 49.99 & 10.48 \\
Norm Reduce      & 96.70 & 94.99 & $-49.56$ & 49.10 & \textbf{10.34} \\
Discard $e_f$    & \textbf{96.87} & 61.79 & $-64.54$ & 49.11 & 0.00 \\
\textbf{UnCLe}   & \textbf{96.87} & 10.00 & \textbf{100.0} & \textbf{50.00} & 13.16 \\
\midrule\midrule
\multicolumn{6}{c}{\textbf{Tiny-ImageNet}} \\ \midrule
\textbf{Methods} & \textbf{RA} & \textbf{FA} & \textbf{UNI} & \textbf{MIA} & \textbf{UT} \\ \midrule
Fixed Noise      & 34.68 & \textbf{9.440} & $-\infty$ & 50.11 & 22.62 \\
Norm Reduce      & 55.11 & 36.61 & 0.80 & 42.65 & \textbf{22.42} \\
Discard $e_f$    & \textbf{56.50} & 15.54 & 6.88 & 48.44 & 0.00 \\
\textbf{UnCLe}   & 55.24 & 10.00 & \textbf{100.0} & \textbf{50.00} & 29.63 \\
\bottomrule
\end{tabular}
\caption{Performance of different noising strategies on four datasets (Request Sequence 1). All other unlearning hyperparameters ($\gamma$, $E_u$) are held constant.}
\label{tab:noising_table}
\end{table}

 We experiment with a variety of noising strategies and compare our approach to norm reduction and fixed noise perturbation. \textbf{Norm reduction} uses the unlearning objective from \ref{eqn:norm_forget}. 
 
\begin{equation} \label{eqn:norm_forget}
    \arg\min_{\phi} \ \gamma \cdot \|\mathcal{H}(e_f;\phi)\|^2_2 + \mathcal{L}_{reg}.
\end{equation}

 \textbf{Fixed noise perturbation} uses the objective $\|\mathcal{H}(e_f;\phi) - z\|^2_2 + \gamma \cdot \mathcal{L}_{reg}$ where the noise $z$ is fixed throughout all tasks. \textbf{Discard $e_f$} is the baseline in which to perform unlearning, the forget-task's embedding $e_f$ is simply discarded and replaced with a random embedding. From \ref{tab:noising_table}, we observe that Fixed noise perturbation hampers the retain-task accuracy. We also observe that the forget-task accuracy it achieves, while lower than UnCLe in some instances, is marginally detectable, whereas UnCLe's output remains the closest to the uniform distribution. Norm reduction maintains good RA but exhibits poor unlearning. If further reduction in FA is attempted via increasing burn-in, it compromises the model's stability and impacts RA, as noted in the methodology. We also observe that UnCLe, compared to all the other baselines, has the closest MIA value to 50, demonstrating its superiority in data privacy.

\begin{figure}
    \centering
\includegraphics[width=0.85\linewidth]{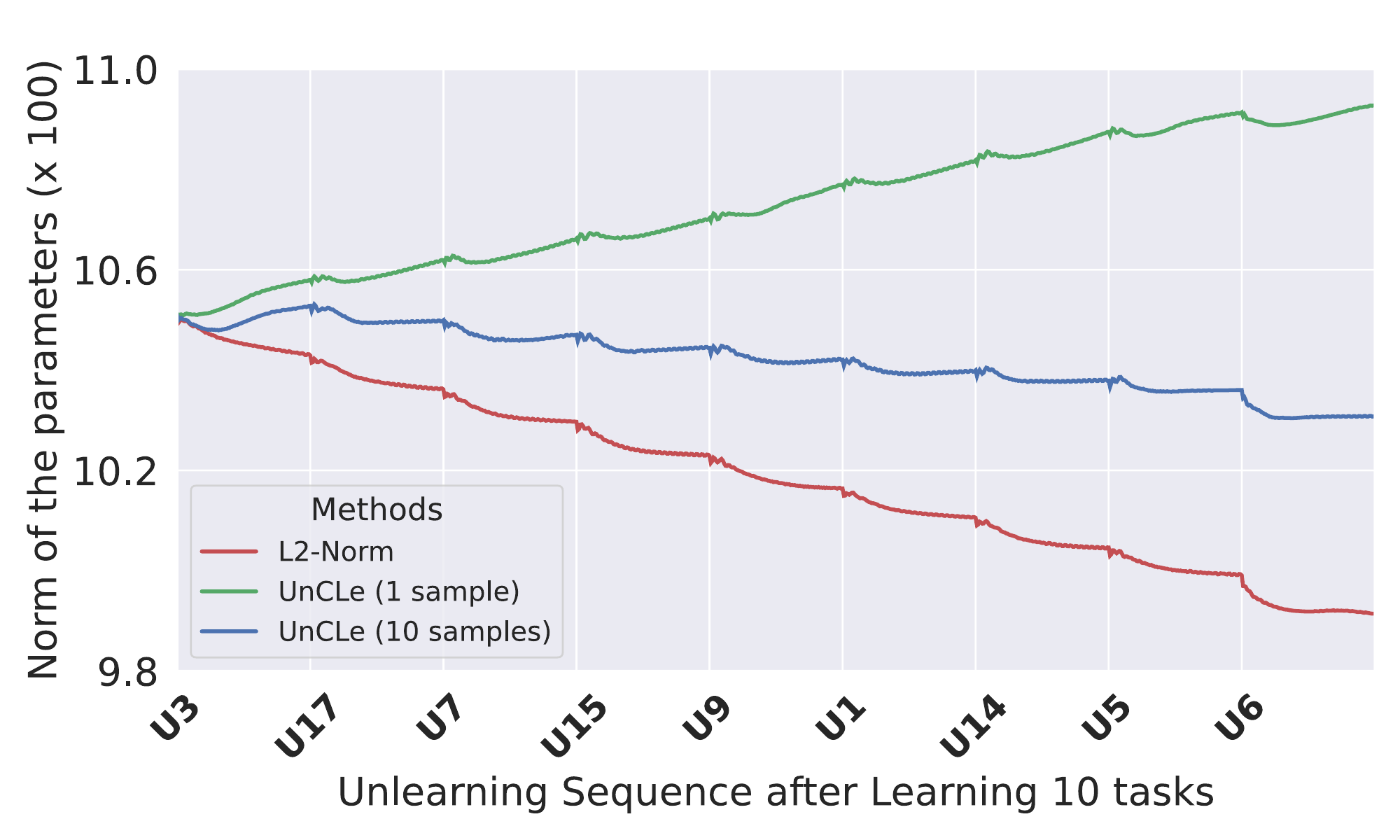}
    \caption{Effect of varying the number of noise samples ($n$) in the unlearning objective}
    \label{fig:noise_compare}
\end{figure}

We also study the effects of the various unlearning strategies considered on the hypernetwork parameters, particularly the effect of varying $n$, the number of noise samples over which the average MSE is computed in the unlearning objective. We compare three cases namely $n=1$ (Fixed Noise), $n=10$ (UnCLe), and $n=\infty$, which is equivalent to a reduction of the $L^2$-norm. The results are the comparison are presented in \ref{fig:noise_compare} wherein we see that as $n$ increases, the magnitude of the hypernetwork parameters falls with each unlearning operation. When $n=1$, MSE with a fixed noise value can lead to the hypernetwork memorizing the particular noise value, which impacts generalization. In contrast, as $n=\infty$, with regularization of the $L^2$-norm of the parameters, the hypernetwork parameters are themselves driven to zero, which can eventually destabilize the hypernetwork. With UnCLe, we adopt $n=10$ to strike a balance between the two extremes.

\subsection{Connecting MSE and L2}
Minimizing the MSE term in the unlearning objective minimizes the $L^2$-norm over the generated main network parameters, and consequently drives them toward zero. As a result, the model’s logits for the unlearned task become zero across all output nodes, leading to a uniform distribution over classes. This corresponds to maximum entropy, indicating that the model is maximally uncertain about the forgotten task, precisely the desired effect of unlearning. 

However, direct application of the $L^2$-norm loss in the unlearning objective runs the risk of driving the hypernetwork’s parameters toward zero. We observe this empirically by tracking the magnitude of the hypernetwork parameters through multiple unlearning operations (Figure \ref{fig:noise_compare}). Consequently, this degrades the performance on the retain-tasks and undermines the hypernetwork’s ability to learn new tasks. In contrast, our empirical findings show that the proposed MSE-based unlearning objective still yields uniformly distributed (high-entropy) outputs without compromising the hypernetwork's stability. 

\label{sec: mse-to-l2}
\begin{theorem}
    \label{thm: infinity_behaviour}
    Consider the parameters of a model to be $\theta \in \mathbb{R}^d$.
    The average mean squared error $\frac{1}{n}\sum_{i=1}^n \|\theta - z_i\|_2^2$, where $z_i \sim \mathcal{N}(0, \mathbb{I}_d)$, represents a  noisy approximation to the $L^2$-norm over the parameters $\theta$. Formally,
    \begin{equation}
        \lim_{n\rightarrow\infty} \frac{1}{n} \sum_{i=1}^n \|\theta - z_i\|^2_2  = \|\theta\|^2_2 + d 
        \label{eq:theorem1}
    \end{equation}
\end{theorem}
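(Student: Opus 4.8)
The plan is to expand the squared norm inside the average and apply the law of large numbers to the terms that depend on the random samples $z_i$. First I would write $\|\theta - z_i\|_2^2 = \|\theta\|_2^2 - 2\langle \theta, z_i\rangle + \|z_i\|_2^2$, which is a deterministic identity for each fixed $i$. Averaging over $i$ then gives
\begin{equation}
    \frac{1}{n}\sum_{i=1}^n \|\theta - z_i\|_2^2 = \|\theta\|_2^2 - 2\left\langle \theta, \frac{1}{n}\sum_{i=1}^n z_i\right\rangle + \frac{1}{n}\sum_{i=1}^n \|z_i\|_2^2. \nonumber
\end{equation}
Here $\theta$ is treated as a fixed vector, so the first term is constant and passes through the limit untouched.

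Next I would handle the two remaining terms separately using the strong law of large numbers. The sample mean $\frac{1}{n}\sum_{i=1}^n z_i$ converges almost surely to $\mathbb{E}[z] = 0$ since each $z_i \sim \mathcal{N}(0,\mathbb{I}_d)$ has zero mean, so the cross term $-2\langle \theta, \cdot\rangle$ vanishes in the limit by continuity of the inner product. For the third term, each $\|z_i\|_2^2$ is an i.i.d.\ random variable with $\mathbb{E}[\|z_i\|_2^2] = \sum_{j=1}^d \mathbb{E}[(z_{i,j})^2] = \sum_{j=1}^d 1 = d$, because the coordinates are independent standard normals each with unit variance. Applying the law of large numbers a second time, $\frac{1}{n}\sum_{i=1}^n \|z_i\|_2^2 \to d$ almost surely. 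Combining the three limits yields $\|\theta\|_2^2 + 0 + d = \|\theta\|_2^2 + d$, which is the claimed identity.

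I expect the main subtlety, rather than any genuine obstacle, to be stating the mode of convergence precisely: the equality in the theorem should be read as an almost-sure (or in-expectation) limit, since the left-hand side is a random quantity for finite $n$. If one prefers to avoid invoking the strong law, I would instead take expectations on both sides first, obtaining $\mathbb{E}\big[\frac{1}{n}\sum_i \|\theta - z_i\|_2^2\big] = \|\theta\|_2^2 + d$ exactly for every $n$, and then note that the variance of the average is $O(1/n)$ and vanishes, so the sample average concentrates on its mean. Either route is routine; the only care needed is to make explicit that $\theta$ is held fixed and that the randomness enters solely through the $z_i$. The constant offset $d$ is precisely what the theorem's informal phrasing ("noisy approximation to the $L^2$-norm") captures, and it explains why minimizing the averaged MSE drives $\|\theta\|_2^2$ toward zero up to an additive constant that does not affect the location of the minimizer.
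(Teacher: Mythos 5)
Your proposal is correct and follows essentially the same route as the paper: the paper applies the strong law of large numbers once to the i.i.d.\ variables $Y_i = \|\theta - z_i\|_2^2$ and then computes $\mathrm{E}[Y_i] = \|\theta\|_2^2 + d$ via the same expansion and the facts $\mathrm{E}[z_{ij}]=0$, $\mathrm{E}[z_{ij}^2]=1$, whereas you expand first and invoke the law of large numbers term by term --- a cosmetic reordering of the identical argument. Your remark that the equality should be read as an almost-sure limit matches the paper's $\Pr[\cdot]=1$ formulation.
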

\begin{proof}
Consider $Y_i = \|\theta - z_i\|^2_2$ to be a random variable. Consider $\mathrm{E}[.]$ as the function calculating the expectation of a random variable. As $z_i$ are i.i.d. samples of standard normal and $\theta$ is a constant, $Y_i$ are also i.i.d. samples. Using Strong Law of Large Numbers \cite{slln}, we can say that:
\begin{equation}
    \label{eq: slln}
     \Pr\left[\lim_{n\rightarrow\infty}\frac{1}{n}\sum_{i=1}^n Y_i = \mathrm{E}[Y_i] \right] = 1
\end{equation}
Now we would show that $\mathrm{E}[Y_i] = \|\theta\|^2_2 + d$, where $d$ is the dimension of the parameter $\theta$.
\begin{align}
    \mathrm{E}[Y_i] &= \mathrm{E}\left[\|\theta - z_i\|^2_2\right] \nonumber\\
                % &= \mathrm{E}\left[(\theta - z_i)^T\cdot(\theta - z_i)\right] \nonumber\\
                % &= \mathrm{E}\left[(\theta^T - z_i^T)\cdot(\theta - z_i)\right] \nonumber\\
                &= \mathrm{E}\left[\theta^T\theta - z_i^T\theta - \theta^Tz_i  + z_i^Tz_i\right] \nonumber\\
                &= \mathrm{E}\left[\theta^T\theta\right] - 2\mathrm{E}\left[z_i^T\theta\right] + \mathrm{E}\left[z_i^Tz_i\right] \label{eq: proof11} \\ 
                &= \theta^T\theta - 2\sum_j\theta_j\mathrm{E}[z_{ij}] + \sum_j \mathrm{E}[z_{ij}^2] \nonumber\\
                &= \|\theta\|^2_2 + \sum_j 1 \label{eq: proof12} \\
                &= \|\theta\|^2_2 + d \label{eq: proof13}
\end{align}
Here, Eq \ref{eq: proof11} is using linearity property of expectation and Eq \ref{eq: proof12} uses the fact that $\mathrm{E}[z_{ij}] = 0$ and $\mathrm{E}[z_{ij}^2]$ is nothing but variance of that variable $z_{ij}$, which is equal to $1$.

Based Eq \ref{eq: slln} and Eq \ref{eq: proof13}, we can say that,
\begin{equation}
    \lim_{n\rightarrow\infty} \frac{1}{n} \sum_{i=1}^n \|\theta - z_i\|^2_2  = \|\theta\|^2_2 + d 
\end{equation}   
\end{proof}

\section{F: Other Baselines}
\subsection{Hypernetwork Baselines}
As an ablation, we consider \textbf{JiT-Hnet} and \textbf{GKT-Hnet}, which utilize a hypernetwork for CL in DER++'s place. There is also \textbf{Hnet} that relies on natural catastrophic forgetting as an unlearning mechanism (unlearning realized only after new learning). The results are presented in Appendix H: More Results.
\subsection{Trivial Baselines}
We also compare with standard baselines like fine-tuning (\textbf{FT}) and retraining (\textbf{RT} \& \textbf{RT-Hnet}). FT and the two RT variants assume the availability of the complete retain-task data during unlearning. FT fine-tunes the model on the retain set upon unlearning. RT Retrains from scratch on the retain set. RT-Hnet trains a new hypernetwork sequentially on the retain set upon unlearning. The results are presented in Appendix H: More Results.

\section{G: Membership Inference Attack}

\begin{table}[ht]
\centering
\small
\setlength{\tabcolsep}{4pt} % tighten cols a little
\begin{tabular}{@{}c|cc|cc@{}}
\toprule
\multicolumn{5}{c}{\textbf{Permuted-MNIST \& 5-Tasks}} \\ \midrule
\textbf{Methods} & \multicolumn{2}{c|}{\textbf{5-Tasks}} & \multicolumn{2}{c}{\textbf{Permuted-MNIST}} \\
                 & \textbf{Mean} & \textbf{Std}           & \textbf{Mean} & \textbf{Std}            \\ \midrule
FT*        & 49.56 & 0.22 & 49.63 & 0.07 \\
RT*        & 49.95 & 0.37 & 49.98 & 0.07 \\
BadTeacher & 50.03 & 0.16 & 50.04 & 0.11 \\
SCRUB      & 50.25 & 0.21 & 49.99 & 0.01 \\
SalUn      & 50.25 & 0.29 & 49.85 & 0.13 \\
JiT        & 49.99 & 0.17 & 49.95 & 0.08 \\
GKT        & 50.05 & 0.08 & 49.99 & 0.01 \\
RT-Hnet*   & 49.75 & 0.06 & 49.90 & 0.04 \\
Jit-Hnet   & 50.10 & 0.06 & 50.02 & 0.08 \\
GKT-Hnet   & 49.99 & 0.19 & 49.98 & 0.22 \\ \midrule
UnCLe      & \textbf{50.01} & 0.09 & \textbf{50.00} & 0.02 \\ \midrule\midrule
\multicolumn{5}{c}{\textbf{CIFAR100 \& Tiny-ImageNet}} \\ \midrule
\textbf{Methods} & \multicolumn{2}{c|}{\textbf{CIFAR100}} & \multicolumn{2}{c}{\textbf{Tiny-ImageNet}} \\
                 & \textbf{Mean} & \textbf{Std}           & \textbf{Mean} & \textbf{Std}            \\ \midrule
FT*        & 45.00 & 0.66 & 45.26 & 0.73 \\
RT*        & 49.82 & 0.50 & 49.72 & 0.23 \\
BadTeacher & 53.06 & 0.82 & 52.54 & 0.33 \\
SCRUB      & \textbf{50.00} & 0.00 & \textbf{50.00} & 0.00 \\
SalUn      & 46.26 & 0.42 & 47.47 & 0.73 \\
JiT        & 45.80 & 0.73 & 47.28 & 0.15 \\
GKT        & 49.88 & 0.20 & 49.93 & 0.06 \\
RT-Hnet*   & 50.28 & 0.39 & 50.05 & 0.22 \\
Jit-Hnet   & 48.74 & 1.11 & 49.39 & 0.24 \\
GKT-Hnet   & 50.12 & 0.11 & 50.10 & 0.05 \\ \midrule
UnCLe      & \textbf{50.00} & 0.00 & \textbf{50.00} & 0.00 \\ \bottomrule
\end{tabular}
\caption{MIA performance of baseline approaches versus \textsc{UnCLe} on four datasets (sequence 1, averaged over three seeds).}
\label{Tab:mia-appendix}
\end{table}

The Membership Inference Attack (MIA) metric \cite{mia} assesses the effectiveness of machine unlearning by measuring a model's ability to "forget" training data. MIAs exploit model behavior to infer whether a data point was in the training set, posing privacy concerns. In unlearning, the goal is for the model to treat forgotten data like unseen data. Adversarial attacks test this by attempting to determine data membership. A 50\% MIA value indicates the attack is no better than random guessing, meaning the model has effectively mitigated membership inference risks.

Table \ref{Tab:mia-appendix} presents MIA values, including mean and standard deviation, across various methods and datasets such as Permuted-MNIST, CIFAR100, and Tiny-ImageNet. The results, consistently around 50\%, indicate that models generally exhibit strong resistance to MIA, making it difficult for attackers to distinguish between training and non-training data points.  

In the task unlearning setup with task-incremental continual learning, different heads are used for different tasks. When a task is forgotten, the corresponding head undergoes severe randomization, rendering its representations indistinguishable. As a result, MIA performance remains equivalent across all methods, as the forget head produces inherently random representations.  

Notably, our approach, \textbf{UnCLe}, demonstrates near-perfect resistance to MIA, maintaining a mean MIA value of 50.00\% across all datasets. This suggests that the attacker's ability to infer data membership is no better than random guessing, ensuring robust privacy protection.

\section{H: More Results}

\subsection{Unlearning Time}
\label{sec: burnin}
Unlearning time refers to the time (in sec.) required to unlearn a particular task. In our approach the unlearning time is controlled by burn-in epochs. \ref{tab: unlearning_time} provides unlearning time values for different unlearning methods. The value provided in the table is an average across all the unlearning time required for each unlearning operation in a request sequence for CLU setting.

\begin{table}[ht]
\centering
\begin{tabular}{@{}c|cccc@{}}
\toprule
\multirow{2}{*}{\textbf{Methods}} & \multicolumn{4}{c}{\textbf{Unlearning Time (in sec)}}                                             \\ \cmidrule(l){2-5} 
                                  & \textbf{5T} & \textbf{PMNIST} & \textbf{C100} & \textbf{TI} \\ \midrule

\textbf{BadTeacher}               & 76.78            & 55.50                   & 10.95              & 8.680                  \\
\textbf{SCRUB}                    & 171.1            & 118.9                   & 30.02              & 32.52                  \\
\textbf{SalUn}                    & 491.9            & 358.3                   & 51.47              & 65.20                  \\
\textbf{JiT}                      & 242.1            & 213.7                   & 24.01              & 17.71                  \\
\textbf{GKT}                      & 57.67            & 36.08                   & 68.61              & 147.5                  \\
\textbf{SSD}                      & 47.12            & 35.16                   & \textbf{5.730}     & \textbf{5.810}         \\
\textbf{Jit-Hnet}                 & 306.6            & 257.5                   & 22.94              & 22.83                  \\
\textbf{GKT-Hnet}                 & 83.30            & 43.77                   & 83.46              & 75.75                  \\ \midrule
\textbf{UnCLe}                    & \textbf{33.28}   & \textbf{13.16}          & 41.70              & 29.63                  \\ \bottomrule
\end{tabular}
\caption{Table provides comparison on Unlearning Time between different baselines and our approach on the datasets 5-Tasks (5T), Permuted MNIST (PMNIST), CIFAR100 (C100) and TinyImageNet (TI)}
\label{tab: unlearning_time}
\end{table}

\subsection{ResNet18 Results} \label{sec: res18}
In this section, we present experiments with ResNet-18 as a backbone architecture. Each of these experiments is performed on Sequence 1 (Table \ref{tab: seq_table}). The results are averaged over three runs with different seeds. We can observe from Table \ref{tab:R18-5d-ra-fa}, Table \ref{tab:R18-c100-ra-fa}, Table \ref{tab:R18-tiny-ra-fa}, Table \ref{tab:pmnist-seq1-ra-fa}, Table \ref{tab: pmnist-seq2-ra-fa} and Table \ref{tab: pmnist-seq3-ra-fa} that UnCLe performs better than all the other baselines on at least 3 out of 5 metrics. On the metric in which UnCLe is not the best, it performs equally well compared to the best one. These tables show UnCLe's superiority over other unlearning baselines.  
\begin{table}[ht]
\centering
\small
\begin{tabular}{@{}c|cc|cc@{}}
\toprule
\multirow{2}{*}{\textbf{Methods}} &
  \multicolumn{2}{c|}{\textbf{RA}} &
  \multicolumn{2}{c}{\textbf{FA}} \\ \cmidrule(l){2-5}
 & \textbf{mean} & \textbf{std} & \textbf{mean} & \textbf{std} \\ \midrule
BadTeacher  & 62.87 & 8.07 & 9.650 & 0.65 \\
SCRUB       & 10.90 & 2.44 & 9.340 & 0.58 \\
SalUn       & 58.94 & 9.87 & 35.16 & 5.02 \\
JiT         & 16.66 & 2.77 & \textbf{8.990} & 1.93 \\
GKT         & 10.82 & 1.25 & 15.21 & 1.68 \\
SSD         & 30.22 & 22.5 & 15.07 & 6.14 \\
Jit-Hnet    & 14.74 & 4.69 & 13.15 & 4.49 \\
GKT-Hnet    & 10.07 & 0.71 & 10.69 & 1.40 \\ \midrule
\textbf{UnCLe} & \textbf{93.77} & 0.40 & 9.600 & 0.99 \\ \bottomrule
\end{tabular}
\caption{Results on PenTask (Sequence 1) with ResNet-18 backbone.}
\label{tab:R18-5d-ra-fa}
\end{table}

\begin{table}[ht]
\centering
\small
\begin{tabular}{@{}c|cc|cc@{}}
\toprule
\multirow{2}{*}{\textbf{Methods}} &
  \multicolumn{2}{c|}{\textbf{RA}} &
  \multicolumn{2}{c}{\textbf{FA}} \\ \cmidrule(l){2-5}
 & \textbf{mean} & \textbf{std} & \textbf{mean} & \textbf{std} \\ \midrule
BadTeacher  & 65.13 & 3.67 & 10.11 & 0.52 \\
SCRUB       & 53.39 & 3.15 & \textbf{10.00} & 0.00 \\
SalUn       & \textbf{69.29} & 2.42 & 46.24 & 0.99 \\
JiT         & 68.96 & 1.93 & 40.74 & 0.41 \\
GKT         & 61.53 & 3.49 & 11.01 & 0.57 \\
SSD         & 47.31 & 5.45 & \textbf{10.00} & 0.00 \\
Jit-Hnet    & 51.52 & 18.8 & 21.84 & 4.71 \\
GKT-Hnet    & 40.87 & 5.85 & 13.89 & 1.11 \\ \midrule
\textbf{UnCLe} & 66.97 & 3.59 & \textbf{10.00} & 0.00 \\ \bottomrule
\end{tabular}
\caption{Results on CIFAR-100 (Sequence 1) with ResNet-18 backbone.}
\label{tab:R18-c100-ra-fa}
\end{table}

\begin{table}[ht]
\centering
\small
\begin{tabular}{@{}c|cc|cc@{}}
\toprule
\multirow{2}{*}{\textbf{Methods}} &
  \multicolumn{2}{c|}{\textbf{RA}} &
  \multicolumn{2}{c}{\textbf{FA}} \\ \cmidrule(l){2-5}
 & \textbf{mean} & \textbf{std} & \textbf{mean} & \textbf{std} \\ \midrule
BadTeacher  & 53.76 & 1.63 & 12.12 & 0.52 \\
SCRUB       & 11.71 & 1.90 & \textbf{10.00} & 0.00 \\
SalUn       & 59.47 & 0.80 & 39.27 & 1.64 \\
JiT         & \textbf{59.88} & 0.65 & 38.60 & 0.77 \\
GKT         & 54.31 & 0.31 & 13.01 & 0.90 \\
SSD         & 53.37 & 2.60 & 10.26 & 0.36 \\
Jit-Hnet    & 59.20 & 1.77 & 16.32 & 0.23 \\
GKT-Hnet    & 48.34 & 1.15 & 10.92 & 0.43 \\ \midrule
\textbf{UnCLe} & 59.22 & 2.14 & \textbf{10.00} & 0.00 \\ \bottomrule
\end{tabular}
\caption{Results on Tiny-ImageNet (Sequence 1) with ResNet-18 backbone.}
\label{tab:R18-tiny-ra-fa}
\end{table}

\begin{table}[ht]
\centering
\small
\begin{tabular}{@{}c|cc|cc@{}}
\toprule
\multirow{2}{*}{\textbf{Methods}} &
  \multicolumn{2}{c|}{\textbf{RA}} &
  \multicolumn{2}{c}{\textbf{FA}} \\ \cmidrule(l){2-5}
 & \textbf{Mean} & \textbf{Std} & \textbf{Mean} & \textbf{Std} \\ \midrule
FT*        & 94.47 & 0.12 & 67.70 & 2.11 \\
RT*        & 93.35 & 0.19 & 10.38 & 1.53 \\
BadTeacher & 92.17 & 0.04 & 10.20 & 0.40 \\
SCRUB      & 9.97  & 0.46 & \textbf{9.84} & 0.14 \\
SalUn      & 92.39 & 0.26 & 59.24 & 2.74 \\
JiT        & 86.93 & 6.09 & 29.90 & 4.96 \\
GKT        & 89.77 & 0.31 & 12.13 & 0.95 \\
SSD        & 86.32 & 0.40 & 9.93  & 0.13 \\
CLPU       & 91.73 & 0.22 & \textbf{0.00} & 0.00 \\
RT-Hnet*   & 70.78 & 1.71 & 14.08 & 0.54 \\
Hnet       & 96.60 & 0.16 & 96.91 & 0.09 \\
Jit-Hnet   & 76.81 & 14.1 & 10.27 & 0.94 \\
GKT-Hnet   & 95.34 & 0.37 & 14.46 & 0.35 \\ \midrule
\textbf{UnCLe} & \textbf{96.87} & 0.20 & 10.00 & 0.06 \\ \bottomrule
\end{tabular}
\caption{Permuted-MNIST — Sequence 1 (ResNet-18).}
\label{tab:pmnist-seq1-ra-fa}
\end{table}

\begin{table}[ht]
\centering
\small
\begin{tabular}{@{}c|cc|cc@{}}
\toprule
\multirow{2}{*}{\textbf{Methods}} &
  \multicolumn{2}{c|}{\textbf{RA}} &
  \multicolumn{2}{c}{\textbf{FA}} \\ \cmidrule(l){2-5}
 & \textbf{Mean} & \textbf{Std} & \textbf{Mean} & \textbf{Std} \\ \midrule
FT*        & 95.12 & 0.68 & 70.51 & 1.29 \\
RT*        & 95.19 & 0.41 & 10.22 & 0.68 \\
BadTeacher & 94.88 & 0.30 & 9.94  & 0.54 \\
SCRUB      & 10.06 & 0.07 & \textbf{9.81} & 0.31 \\
SalUn      & 95.30 & 0.11 & 56.92 & 0.87 \\
JiT        & 36.59 & 47.23 & 19.70 & 3.11 \\
GKT        & 92.35 & 0.25 & 10.70 & 0.82 \\
SSD        & 89.75 & 0.74 & 9.84  & 0.16 \\
CLPU       & 95.21 & 0.29 & \textbf{0.00} & 0.00 \\
RT-Hnet*   & 82.94 & 14.33 & 14.02 & 0.55 \\
Hnet       & 96.67 & 0.29 & 96.71 & 0.12 \\
Jit-Hnet   & 94.15 & 2.19 & 10.55 & 0.54 \\
GKT-Hnet   & 96.31 & 0.09 & 13.84 & 0.33 \\ \midrule
\textbf{UnCLe} & \textbf{97.00} & 0.15 & 9.84 & 0.16 \\ \bottomrule
\end{tabular}
\caption{Permuted-MNIST — Sequence 2 (ResNet-18).}
\label{tab:pmnist-seq2-ra-fa}
\end{table}

\begin{table}[ht]
\centering
\small
\begin{tabular}{@{}c|cc|cc@{}}
\toprule
\multirow{2}{*}{\textbf{Methods}} &
  \multicolumn{2}{c|}{\textbf{RA}} &
  \multicolumn{2}{c}{\textbf{FA}} \\ \cmidrule(l){2-5}
 & \textbf{Mean} & \textbf{Std} & \textbf{Mean} & \textbf{Std} \\ \midrule
FT*        & 94.22 & 0.10 & 65.17 & 1.93 \\
RT*        & 93.49 & 0.06 & 10.62 & 1.01 \\
BadTeacher & 79.56 & 4.29 & 10.28 & 0.81 \\
SCRUB      & 9.97  & 0.08 & 9.98  & 0.25 \\
SalUn      & 82.40 & 0.89 & 64.78 & 2.31 \\
JiT        & 34.45 & 42.3 & 31.00 & 11.7 \\
GKT        & 12.80 & 2.35 & 11.43 & 0.72 \\
SSD        & 9.90  & 0.32 & 9.92  & 0.45 \\
CLPU       & 91.72 & 0.16 & \textbf{0.00} & 0.00 \\
RT-Hnet*   & 49.57 & 8.69 & 16.15 & 0.74 \\
Hnet       & 96.80 & 0.08 & 96.72 & 0.11 \\
Jit-Hnet   & 9.41  & 0.43 & \textbf{9.73} & 0.63 \\
GKT-Hnet   & 13.96 & 2.53 & 17.25 & 2.26 \\ \midrule
\textbf{UnCLe} & \textbf{96.98} & 0.23 & 9.93 & 0.19 \\ \bottomrule
\end{tabular}
\caption{Permuted-MNIST — Sequence 3 (ResNet-18).}
\label{tab:pmnist-seq3-ra-fa}
\end{table}

\subsection{ResNet50 Results} \label{sec: res50}

The results from the primary results table are obtained from Sequence 1, averaged over three runs with different seeds. This section hosts the results from all three sequences, reported with mean and standard deviation obtained from averaging each experiment performed over three different seeds. The section is organized as a list of tables, with one table for each dataset-sequence pair, in the order of 5-Tasks, CIFAR-100, and Tiny-ImageNet.

\begin{table}[ht]
\centering
\small
\begin{tabular}{@{}c|cc|cc@{}}
\toprule
\multirow{2}{*}{\textbf{Methods}} &
  \multicolumn{2}{c|}{\textbf{RA}} &
  \multicolumn{2}{c}{\textbf{FA}} \\ \cmidrule(l){2-5}
 & \textbf{mean} & \textbf{std} & \textbf{mean} & \textbf{std} \\ \midrule
FT$^*$        & 88.66 & 0.45 & 67.99 & 2.83 \\
RT$^*$        & 84.79 & 1.88 & 9.600 & 4.22 \\
BadTeacher    & 54.38 & 23.5 & \textbf{8.550} & 1.23 \\
SCRUB         & 9.160 & 0.15 & 12.97 & 0.08 \\
SalUn         & 74.75 & 1.56 & 25.02 & 1.22 \\
JiT           & 19.10 & 13.8 & 17.20 & 3.55 \\
GKT           & 10.27 & 0.91 & 13.67 & 1.52 \\
SSD           & 8.850 & 0.00 & 10.36 & 0.09 \\
LWSF$^{+}$    & 31.76 & 0.25 & \textbf{0.00} & 0.00 \\
CLPU          & 85.00 & 0.43 & \textbf{0.00} & 0.00 \\
RT-Hnet$^*$   & 76.23 & 3.31 & 18.44 & 0.78 \\
Hnet$^{+}$    & 94.56 & 0.28 & 96.73 & 0.04 \\
Jit-Hnet      & 10.19 & 1.18 & 11.29 & 4.37 \\
GKT-Hnet      & 10.53 & 0.61 & 14.48 & 1.00 \\ \midrule
\textbf{UnCLe}& \textbf{94.12} & 0.43 & 10.04 & 1.14 \\ \bottomrule
\end{tabular}
\caption{5-Tasks (Sequence 1).}
\label{tab:5d_seq1_rafa}
\end{table}

\begin{table}[ht]
\centering
\small
\begin{tabular}{@{}c|cc|cc@{}}
\toprule
\multirow{2}{*}{\textbf{Methods}} &
  \multicolumn{2}{c|}{\textbf{RA}} &
  \multicolumn{2}{c}{\textbf{FA}} \\ \cmidrule(l){2-5}
 & \textbf{mean} & \textbf{std} & \textbf{mean} & \textbf{std} \\ \midrule
FT$^*$        & 88.54 & 0.53 & 58.07 & 2.40 \\
RT$^*$        & 86.14 & 3.72 & 9.410 & 0.59 \\
BadTeacher    & 40.01 & 3.01 & 8.270 & 0.37 \\
SCRUB         & 9.90  & 0.24 & 12.80 & 2.63 \\
SalUn         & 56.29 & 7.81 & 29.40 & 2.71 \\
JiT           & 11.66 & 3.51 & 22.31 & 6.30 \\
GKT           & 10.52 & 0.22 & 14.44 & 0.88 \\
SSD           & 10.10 & 0.01 & 14.59 & 4.66 \\
CLPU          & 83.18 & 1.62 & \textbf{0.00} & 0.00 \\
RT-Hnet$^*$   & 62.78 & 6.57 & 10.55 & 1.01 \\
Hnet$^{+}$    & \textbf{96.39} & 0.07 & 93.84 & 0.24 \\
Jit-Hnet      & 9.770 & 0.23 & 17.18 & 8.80 \\
GKT-Hnet      & 9.010 & 1.14 & \textbf{9.370} & 0.69 \\ \midrule
\textbf{UnCLe}& 95.91 & 0.07 & 9.930 & 3.23 \\ \bottomrule
\end{tabular}
\caption{5-Tasks (Sequence 2).}
\label{tab:5d_seq2_rafa}
\end{table}

\begin{table}[ht]
\centering
\small
\begin{tabular}{@{}c|cc|cc@{}}
\toprule
\multirow{2}{*}{\textbf{Methods}} &
  \multicolumn{2}{c|}{\textbf{RA}} &
  \multicolumn{2}{c}{\textbf{FA}} \\ \cmidrule(l){2-5}
 & \textbf{mean} & \textbf{std} & \textbf{mean} & \textbf{std} \\ \midrule
FT$^*$        & 91.21 & 0.45 & 58.63 & 0.59 \\
RT$^*$        & 91.87 & 0.66 & \textbf{7.86} & 1.81 \\
BadTeacher    & 39.07 & 25.2 & 10.20 & 0.96 \\
SCRUB         & 9.22  & 2.39 & 10.22 & 0.55 \\
SalUn         & 37.55 & 6.75 & 21.99 & 1.96 \\
JiT           & 12.56 & 7.53 & 11.77 & 1.43 \\
GKT           & 8.35  & 0.88 & 13.03 & 1.25 \\
SSD           & 12.42 & 7.55 & 10.22 & 0.55 \\
CLPU          & 89.54 & 0.79 & 0.00 & 0.00 \\
RT-Hnet$^*$   & \textbf{94.05} & 0.13 & 9.350 & 0.48 \\
Hnet$^{+}$    & 92.96 & 0.13 & 93.26 & 0.08 \\
Jit-Hnet      & 7.12  & 0.66 & 11.40 & 2.95 \\
GKT-Hnet      & 15.11 & 4.94 & 13.74 & 0.90 \\ \midrule
\textbf{UnCLe}& 93.24 & 0.76 & 11.40 & 3.05 \\ \bottomrule
\end{tabular}
\caption{5-Tasks (Sequence 3).}
\label{tab:5d_seq3_rafa}
\end{table}

\begin{table}[ht]
\centering
\small
\begin{tabular}{@{}c|cc|cc@{}}
\toprule
\multirow{2}{*}{\textbf{Methods}} &
  \multicolumn{2}{c|}{\textbf{RA}} &
  \multicolumn{2}{c}{\textbf{FA}} \\ \cmidrule(l){2-5}
 & \textbf{Mean} & \textbf{Std} & \textbf{Mean} & \textbf{Std} \\ \midrule
\textbf{FT$^*$}        & \textbf{72.43} & 3.46 & 55.44 & 4.16 \\
\textbf{RT$^*$}        & 62.91 & 3.62 & \textbf{9.69} & 1.17 \\
\textbf{BadTeacher}    & 61.75 & 4.47 & 14.57 & 0.60 \\
\textbf{SCRUB}         & 29.45 & 7.18 & 10.06 & 0.10 \\
\textbf{SalUn}         & 66.56 & 3.58 & 44.89 & 2.14 \\
\textbf{JiT}           & 65.94 & 3.58 & 43.93 & 2.48 \\
\textbf{GKT}           & 57.05 & 3.15 & 10.70 & 0.44 \\
\textbf{SSD}           & 43.27 & 4.25 & 10.00 & 0.00 \\
\textbf{CLPU}          & 63.10 & 3.77 & \textbf{0.00} & 0.00 \\
\textbf{RT-Hnet$^*$}   & 23.81 & 0.89 & 9.71 & 1.37 \\
\textbf{Hnet$^{+}$}    & 60.52 & 3.73 & 62.84 & 2.72 \\
\textbf{Jit-Hnet}      & 60.79 & 4.45 & 16.97 & 3.49 \\
\textbf{GKT-Hnet}      & 40.22 & 7.49 & 9.97 & 0.83 \\ \midrule
\textbf{UnCLe}         & 62.65 & 3.85 & 10.00 & 0.00 \\ \bottomrule
\end{tabular}
\caption{CIFAR-100 (Sequence 1) — RA and FA only.}
\label{tab:c100_seq1_ra_fa}
\end{table}

\begin{table}[ht]
\centering
\small
\begin{tabular}{@{}c|cc|cc@{}}
\toprule
\multirow{2}{*}{\textbf{Methods}} &
  \multicolumn{2}{c|}{\textbf{RA}} &
  \multicolumn{2}{c}{\textbf{FA}} \\ \cmidrule(l){2-5}
 & \textbf{Mean} & \textbf{Std} & \textbf{Mean} & \textbf{Std} \\ \midrule
\textbf{FT$^*$}        & \textbf{73.45} & 3.47 & 57.81 & 1.24 \\
\textbf{RT$^*$}        & 67.42 & 2.41 & \textbf{9.84} & 1.60 \\
\textbf{BadTeacher}    & 66.67 & 3.58 & 12.97 & 1.37 \\
\textbf{SCRUB}         & 13.13 & 4.09 & 10.00 & 0.00 \\
\textbf{SalUn}         & 72.33 & 3.00 & 44.16 & 2.21 \\
\textbf{JiT}           & 71.80 & 3.38 & 45.98 & 0.26 \\
\textbf{GKT}           & 61.00 & 2.27 & 11.82 & 0.85 \\
\textbf{SSD}           & 46.45 & 1.43 & 10.00 & 0.00 \\
\textbf{CLPU}          & 69.83 & 1.85 & \textbf{0.00} & 0.00 \\
\textbf{RT-Hnet$^*$}   & 44.32 & 6.60 & 10.06 & 1.06 \\
\textbf{Hnet$^{+}$}    & 66.08 & 2.07 & 62.59 & 1.37 \\
\textbf{Jit-Hnet}      & 66.97 & 2.81 & 20.24 & 2.34 \\
\textbf{GKT-Hnet}      & 58.58 & 5.98 & 11.36 & 0.29 \\ \midrule
\textbf{UnCLe}         & 66.82 & 2.85 & 10.00 & 0.00 \\ \bottomrule
\end{tabular}
\caption{CIFAR-100 (Sequence 2) — RA and FA only.}
\label{tab:c100_seq2_ra_fa}
\end{table}

\begin{table}[ht]
\centering
\small
\begin{tabular}{@{}c|cc|cc@{}}
\toprule
\multirow{2}{*}{\textbf{Methods}} &
  \multicolumn{2}{c|}{\textbf{RA}} &
  \multicolumn{2}{c}{\textbf{FA}} \\ \cmidrule(l){2-5}
 & \textbf{Mean} & \textbf{Std} & \textbf{Mean} & \textbf{Std} \\ \midrule
\textbf{FT$^*$}        & \textbf{72.01} & 2.19 & 58.79 & 3.25 \\
\textbf{RT$^*$}        & 62.47 & 2.65 & 9.79  & 1.34 \\
\textbf{BadTeacher}    & 52.76 & 1.51 & 14.55 & 1.58 \\
\textbf{SCRUB}         & 10.00 & 0.00 & 10.00 & 0.00 \\
\textbf{SalUn}         & 57.92 & 2.15 & 48.07 & 1.99 \\
\textbf{JiT}           & 55.19 & 5.52 & 46.77 & 2.28 \\
\textbf{GKT}           & 11.91 & 1.38 & 12.67 & 1.30 \\
\textbf{SSD}           & 10.00 & 0.00 & 10.36 & 0.62 \\
\textbf{CLPU}          & 61.23 & 2.56 & \textbf{0.00} & 0.00 \\
\textbf{RT-Hnet$^*$}   & 15.42 & 1.75 & \textbf{9.60} & 0.45 \\
\textbf{Hnet$^{+}$}    & 60.66 & 2.37 & 62.04 & 0.35 \\
\textbf{Jit-Hnet}      & 28.17 & 7.95 & 17.87 & 0.69 \\
\textbf{GKT-Hnet}      & 9.54  & 0.94 & 11.44 & 1.49 \\ \midrule
\textbf{UnCLe}         & 58.15 & 6.09 & 10.00 & 0.00 \\ \bottomrule
\end{tabular}
\caption{CIFAR-100 (Sequence 3) — RA and FA only.}
\label{tab:c100_seq3_ra_fa}
\end{table}

\begin{table}[ht]
\centering
\small
\begin{tabular}{@{}c|cc|cc@{}}
\toprule
\multirow{2}{*}{\textbf{Methods}} &
  \multicolumn{2}{c|}{\textbf{RA}} &
  \multicolumn{2}{c}{\textbf{FA}} \\ \cmidrule(l){2-5}
 & \textbf{Mean} & \textbf{Std} & \textbf{Mean} & \textbf{Std} \\ \midrule
\textbf{FT*}        & \textbf{60.08} & 0.30 & 52.56 & 2.38 \\
\textbf{RT*}        & 51.86 & 0.16 & 10.47 & 0.59 \\
\textbf{BadTeacher} & 52.79 & 1.40 & 15.73 & 1.09 \\
\textbf{SCRUB}      & 19.48 & 15.4 & 10.00 & 0.00 \\
\textbf{SalUn}      & 58.44 & 1.57 & 36.02 & 1.23 \\
\textbf{JiT}        & 57.86 & 2.13 & 32.70 & 0.48 \\
\textbf{GKT}        & 52.44 & 1.53 & 11.35 & 0.77 \\
\textbf{SSD}        & 39.78 & 3.43 & 10.37 & 0.62 \\
\textbf{CLPU}       & 54.90 & 1.27 & \textbf{0.00} & 0.00 \\
\textbf{RT-Hnet*}   & 53.54 & 2.76 & \textbf{9.74} & 0.86 \\
\textbf{Hnet}       & 57.53 & 2.26 & 54.31 & 3.35 \\
\textbf{Jit-Hnet}   & 54.10 & 2.39 & 13.05 & 0.35 \\
\textbf{GKT-Hnet}   & 44.40 & 2.26 & 9.85 & 0.30 \\ \midrule
\textbf{UnCLe}      & 55.24 & 3.66 & 10.00 & 0.00 \\ \bottomrule
\end{tabular}
\caption{Tiny-ImageNet (Sequence 1, ResNet-50).}
\label{tab:tiny_seq1_ra_fa}
\end{table}

\subsection{Unlearning Permanence}
Our results in Figure \ref{fig: cifar_stab} and Figure \ref{fig: tiny_stab} indicate that tasks unlearned via conventional unlearning methods are prone to relapse due to subsequent learning operations. Unlike existing approaches, UnCLe prevents relapse of unlearned tasks when new tasks are subsequently introduced, making it a more reliable framework for permanent unlearning.

\begin{figure*}[ht]
    \centering
    \includegraphics[width=0.9\linewidth]{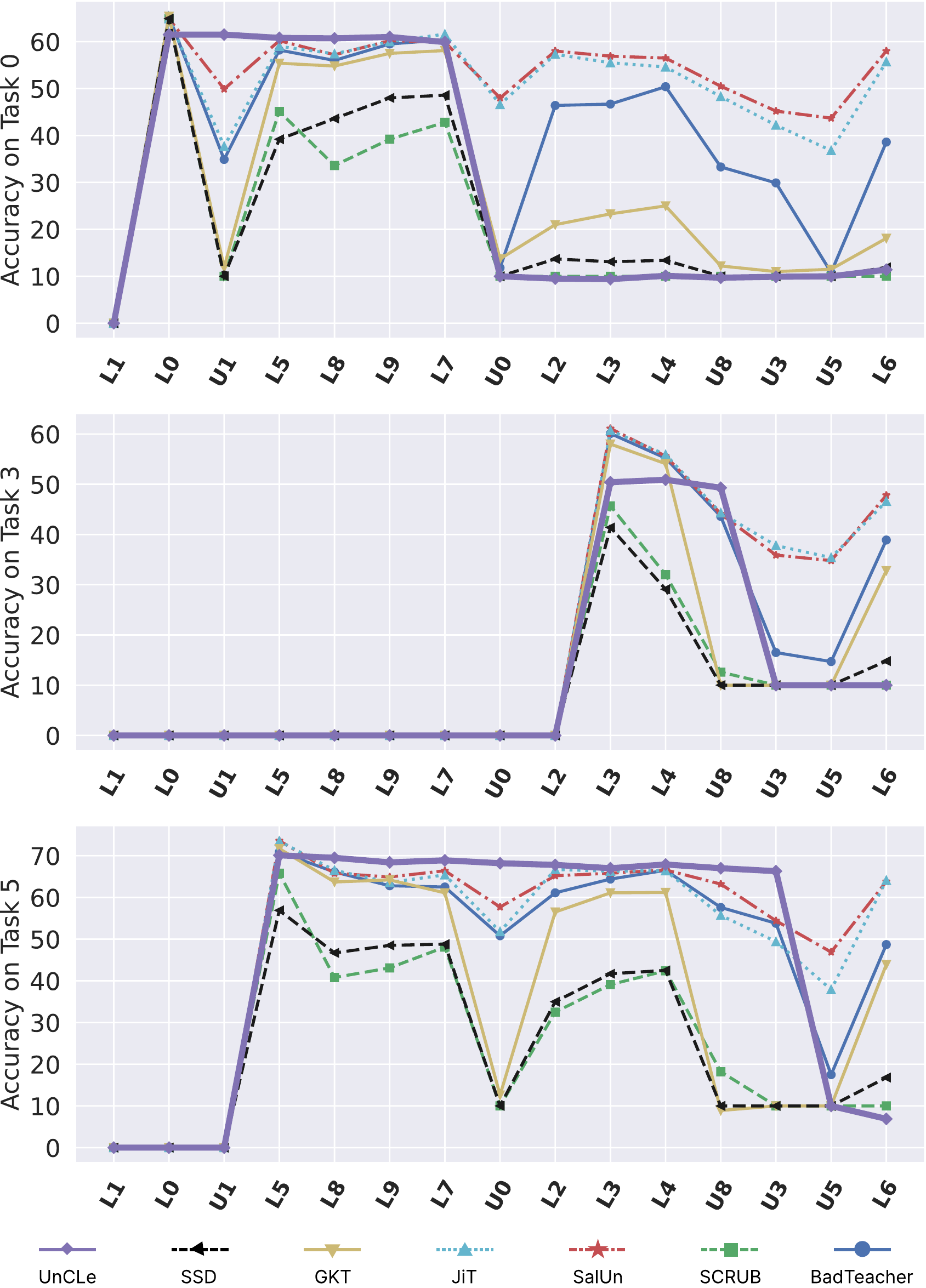}
    \caption{Figure tracking task accuracies through the sequence of operations on the CIFAR 100 dataset. Each chart tracks a single task's accuracy as mentioned on the left.}
    \label{fig: cifar_stab}
\end{figure*}

\begin{figure*}[ht]
    \centering
    \includegraphics[width=0.9\linewidth]{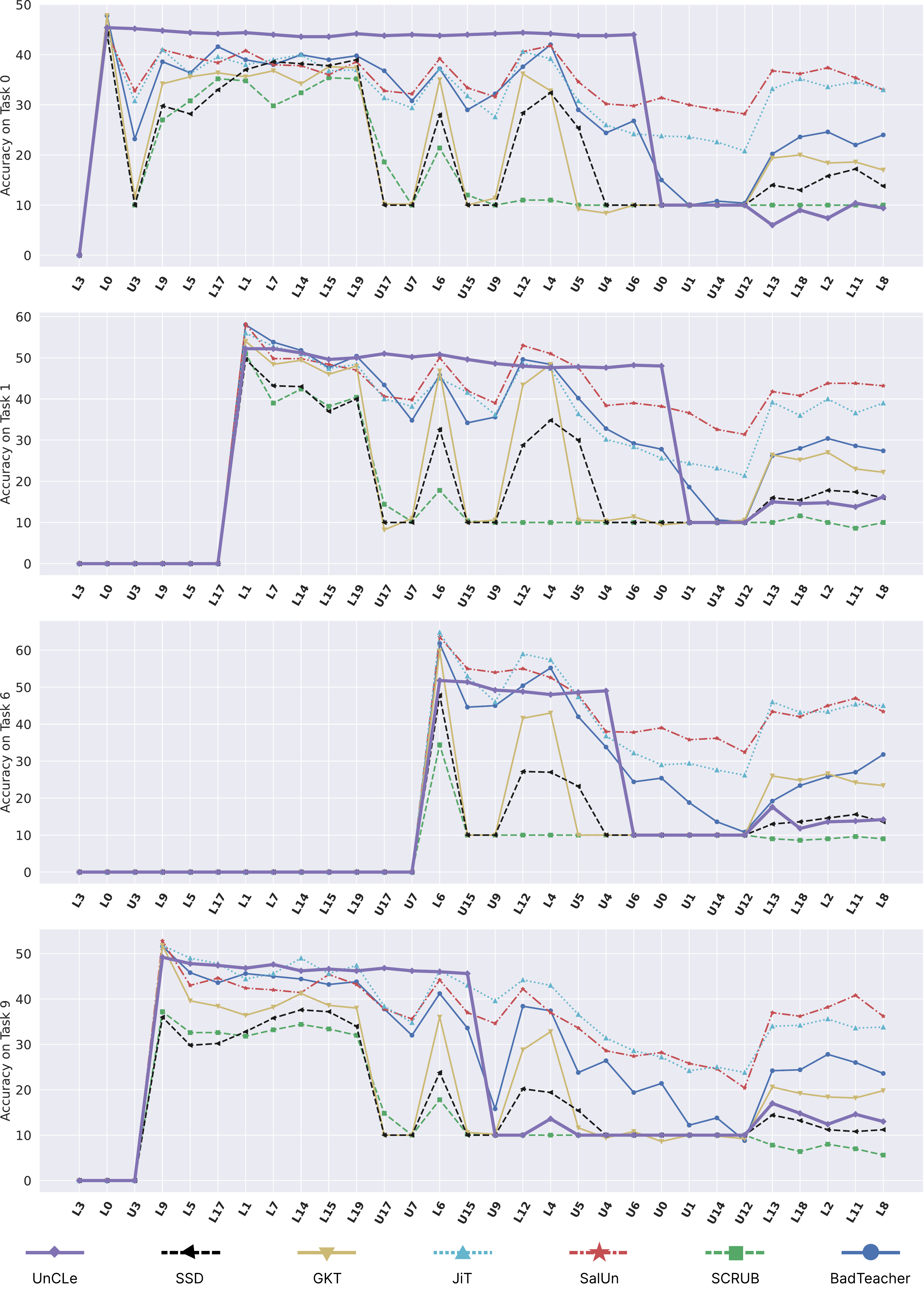}
    \caption{Figure tracking task accuracies through the sequence of operations on the TinyImageNet dataset. Each chart tracks a single task's accuracy as mentioned on the left.}
    \label{fig: tiny_stab}
\end{figure*}

\end{document}